\theoremstyle{plain}
\newtheorem{theorem}{Theorem}[section]
\newtheorem{proposition}[theorem]{Proposition}
\newtheorem{lemma}[theorem]{Lemma}
\newtheorem{corollary}[theorem]{Corollary}
\theoremstyle{definition}
\newtheorem{definition}[theorem]{Definition}
\theoremstyle{remark}
\newcommand{\cut}[1]{}
\crefname{section}{\S\@gobble}{\S\S\@gobble}
\crefname{subsection}{\S\@gobble}{\S\S\@gobble}
\renewcommand*{\sectionautorefname}{\S\@gobble}
\renewcommand*{\subsectionautorefname}{\S\@gobble}
\renewcommand*{\subsubsectionautorefname}{\S\@gobble}
\def\paragraph{\textbf}
\def\subparagraph{\textit}
\newcommand{\nothat}[1]{#1^\circ}
\def\eqref#1{eq.~\ref{#1}}
\def\1{\bm{1}}
\def\rva{{\mathbf{a}}}
\def\rvb{{\mathbf{b}}}
\def\rvw{{\mathbf{w}}}
\def\rvx{{\mathbf{x}}}
\def\rvy{{\mathbf{y}}}
\DeclareMathAlphabet{\mathsfit}{\encodingdefault}{\sfdefault}{m}{sl}
\SetMathAlphabet{\mathsfit}{bold}{\encodingdefault}{\sfdefault}{bx}{n}
\def\gL{{\mathcal{L}}}
\def\gN{{\mathcal{N}}}
\def\gO{{\mathcal{O}}}
\def\gU{{\mathcal{U}}}
\def\sP{{\mathbb{P}}}
\def\sQ{{\mathbb{Q}}}
\def\sW{{\mathbb{W}}}
\newcommand{\E}{\mathbb{E}}
\newcommand{\R}{\mathbb{R}}
\newcommand{\ie}{\textit{i.e.}}
\newcommand{\eg}{\textit{e.g.}}
\newcommand{\Pa}{\mathcal{P}_m(\alpha_n)}
\newcommand{\Pb}{\mathcal{P}_m(\beta_n)}
\newcommand{\an}{\alpha _n}
\newcommand{\bn}{\beta _n}
\definecolor{myred}{RGB}{215,48,39}
\definecolor{mygreen}{RGB}{26,152,80}
\newcommand{\cmark}{\textcolor{mygreen}{\ding{51}}}
\newcommand{\xmark}{\textcolor{myred}{\ding{55}}}
\newcommand{\halfmark}{\textcolor{gray}{\checkmark\kern-1.1ex\raisebox{.7ex}{\rotatebox[origin=c]{125}{--}}}}
\newcommand{\ds}[1]{{\sffamily {#1}}}
\runningauthor{A.\ Tong, N.\ Malkin, K.\ Fatras, L.\ Atanackovic, Y.\ Zhang, G.\ Huguet, G.\ Wolf, Y.\ Bengio}
\begin{document}

\twocolumn[

\aistatstitle{Simulation-Free Schr\"odinger Bridges via Score and Flow Matching}

\aistatsauthor{Alexander Tong$^\dagger$\\
Mila -- Qu\'ebec AI Institute\\Universit\'e de Montr\'eal
\And
Nikolay Malkin$^\dagger$\\
Mila -- Qu\'ebec AI Institute\\Universit\'e de Montr\'eal
\And
Kilian Fatras$^\dagger$\\
Mila -- Qu\'ebec AI Institute\\McGill University
\AND
Lazar Atanackovic\\
University of Toronto\\Vector Institute
\And
Yanlei Zhang\\
Mila -- Qu\'ebec AI Institute\\Universit\'e de Montr\'eal
\And
Guillaume Huguet\\
Mila -- Qu\'ebec AI Institute\\Universit\'e de Montr\'eal
\AND
Guy Wolf\\
Mila -- Qu\'ebec AI Institute\\Universit\'e de Montr\'eal\\Canada CIFAR AI Chair
\And
Yoshua Bengio\\
Mila -- Qu\'ebec AI Institute\\Universit\'e de Montr\'eal\\CIFAR Senior Fellow}
\vspace{0.5cm}
]

\begin{abstract}
We present \emph{simulation-free score and flow matching} ([SF]$^2$M), a simulation-free objective for inferring stochastic dynamics given unpaired samples drawn from arbitrary source and target distributions. Our method generalizes both the score-matching loss used in the training of diffusion models and the recently proposed flow matching loss used in the training of continuous normalizing flows. [SF]$^2$M interprets continuous-time stochastic generative modeling as a Schr\"odinger bridge problem. It relies on static entropy-regularized optimal transport, or a minibatch approximation, to efficiently learn the SB without simulating the learned stochastic process. We find that [SF]$^2$M is more efficient and gives more accurate solutions to the SB problem than simulation-based methods from prior work. Finally, we apply [SF]$^2$M to the problem of learning cell dynamics from snapshot data. Notably, [SF]$^2$M is the first method to accurately model cell dynamics in high dimensions and can recover known gene regulatory networks from simulated data. Our code is available in the \texttt{TorchCFM} package at \url{https://github.com/atong01/conditional-flow-matching}.
\end{abstract}

\section{INTRODUCTION}

Score-based generative models (SBGMs), including diffusion models, are a powerful class of generative models that can represent complex distributions over high-dimensional spaces \citep{sohl-dickstein_deep_2015,song_generative_2019,ho_denoising_2020,nichol_improved_2021,dhariwal_diffusion_2021}. SBGMs typically generate samples by simulating the evolution of a source density -- nearly always a Gaussian -- according to a stochastic differential equation (SDE) \citep{song_score-based_2021}. Despite their empirical success, SBGMs are restricted by their assumption of a Gaussian source, which is essential for optimization with the simulation-free denoising objective. This assumption is often violated in the temporal evolution of physical or biological systems, such as in the case of single-cell gene expression data, which prevents the use of SBGMs for learning the underlying dynamics.

\begin{figure*}[t]
\includegraphics[width=0.6\textwidth]{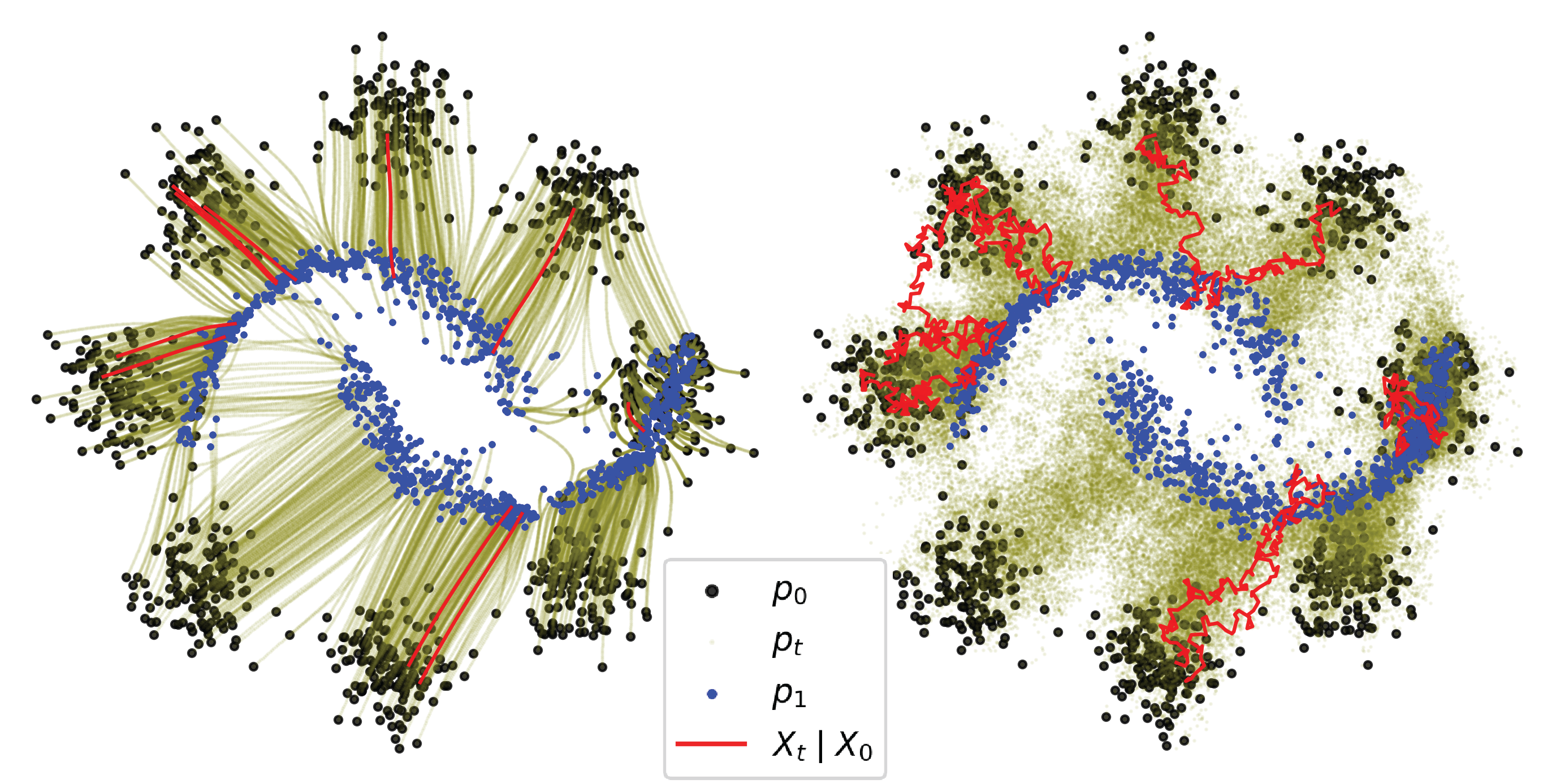}
\includegraphics[width=0.4\textwidth,trim=0 10 0 10]{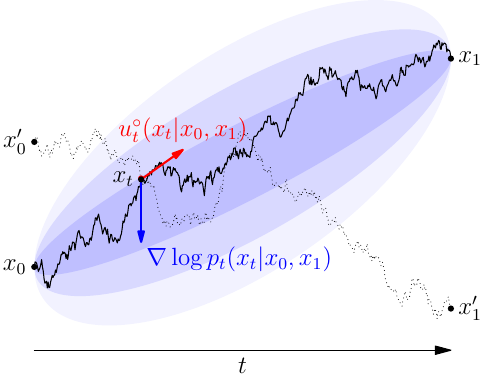}
\caption{\small{\textbf{Left:} ODE and SDE paths from \ds{8gaussians} to \ds{moons}, sampled from a model trained using [SF]$^2$M. [SF]$^2$M makes it possible to vary the diffusion schedule at inference time and thus interpolate between ODEs and SDEs that have the same marginal densities. \textbf{Right:} Illustration of the stochastic regression objective in [SF]$^2$M. Given a source point $x_0$ and target point $x_1$ sampled from an entropic OT plan between marginals, an intermediate point $x_t$ is sampled from the Brownian bridge (marginal in light blue) in a simulation-free way. Neural networks are regressed to the ODE drift $\nothat u_t(x_t|x_0,x_1)$ and to the conditional score $\nabla\log p_t(x_t|x_0,x_1)$. The regression objective is stochastic, as the same point $x_t$ may appear on different conditional paths, \eg, the dotted path from $x_0'$ to $x_1'$. The stochastic regression recovers dynamics that transform the marginal at time 0 to that at time 1.}}
\label{fig:figure_one}
\end{figure*}

An approach of choice in such problems has been to use flow-based generative models, synonymous with continuous normalizing flows (CNFs) \citep{chen_neural_2018,grathwohl_ffjord:_2019,finlay_how_2020}. Flow-based models assume a \emph{deterministic} continuous-time generative process and fit an ordinary differential equation (ODE) that transforms the source density to the target density. Flow-based models were previously limited by inefficient simulation-based training objectives that require an expensive integration of the ODE at training time. However, recent work has introduced simulation-free training objectives that make CNFs competitive with SBGMs when a Gaussian source is assumed \citep{lipman_flow_2022,liu_flow_2023,pooladian_2023_multisample} and extended these objectives to the case of arbitrary source distributions \citep{liu_rectified_2022,albergo_building_2023,tong_conditional_2023}. However, these objectives do not yet apply to learning stochastic dynamics, which can be beneficial both for generative modeling and for recovering the dynamics of systems.

The Schr\"odinger bridge (SB) problem -- the canonical probabilistic formulation of stochastically mapping between two arbitrary distributions -- considers the most likely evolution between a source and target probability distributions under a given reference process \citep{schrodinger,leonard_schrodinger}. The SB problem has been applied in a wide variety of problems, including generative modeling~\citep{de_bortoli_diffusion_2021,vargas_solving_2021,chen_likelihood_2022,wang_deep_2021,song_generative_2019}, modeling natural stochastic dynamical systems~\citep{schiebinger_optimal-transport_2019,holdijk_path_2022,koshizuka_neural_2022}, and mean field games~\citep{liu_deep_2022}. Except for a small number of special cases (\eg,\ Gaussian~\citep{mallasto_entropy_2021,bunne_recovering_2022}), the SB problem typically does not have a closed-form solution, but can be approximated with iterative algorithms that require simulating the learned stochastic process \citep{de_bortoli_diffusion_2021,chen_likelihood_2022,bunne_recovering_2022}. While theoretically sound, these methods present numerical and practical issues that limit their scalability to high dimensions~\citep{shi_diffusion_2023}.

\textbf{This paper introduces a simulation-free objective for the Schr\"odinger bridge problem} called \emph{simulation-free score and flow matching} ([SF]$^2$M). [SF]$^2$M simultaneously generalizes (1) the simulation-free objectives for CNFs \citep{tong_conditional_2023,liu_flow_2023} to the case of stochastic dynamics and (2) the denoising training objective for diffusion models to the case of arbitrary source distributions (\autoref{fig:figure_one}). Our algorithm uses a connection between the SB problem and entropic optimal transport (OT) to express the Schr\"odinger bridge as a mixture of Brownian bridges \citep{de_bortoli_diffusion_2021,leonard_schrodinger}. In contrast to dynamic SB algorithms that require simulating an SDE at every iteration, [SF]$^2$M takes advantage of \emph{static} entropic OT maps between source and target distributions, which are efficiently computed by the Sinkhorn algorithm \citep{sinkhorn_relationship_1964}. 

We demonstrate the effectiveness of [SF]$^2$M on both synthetic and real-world datasets. On synthetic data, we show that [SF]$^2$M performs better than related prior work and finds a better approximation to the true Schr\"odinger bridge. As an application to real data, we consider modeling sequences of cross-sectional measurements (\ie, unpaired time series observations) by a sequence of Schr\"odinger bridges. While there are many prior methods on modeling cells with Schr\"odinger bridges in the static setting~\citep{schiebinger_optimal-transport_2019,huguet_geodesic_2022,Lavenant_towards_2023,nolan_Brassinosteroid_2023} or low-dimensional dynamic setting~\citep{bunne_proximal_2022,bunne_recovering_2022,koshizuka_neural_2022}, [SF]$^2$M is the first method able to scale to thousands of gene dimensions, as its training is completely simulation-free. We also introduce a static manifold geodesic map which improves cell interpolations in the dynamic setting, demonstrating one of the first practical applications of Schr\"odinger bridge approximations with non-Euclidean costs. Finally, we show that unlike with static optimal transport, we are able to directly model and recover the gene-gene interaction network driving the cell dynamics. 

We summarize our \textbf{main contributions} below:

\begin{itemize}[noitemsep,topsep=0pt,parsep=0pt,partopsep=0pt,label={\large\textbullet},leftmargin=*]
    \item \looseness=-1 We present [SF]$^2$M, the first simulation-free objective for the Schr\"odinger bridge problem, and prove its correctness.
    \item We study effective empirical and minibatch approximations to the entropic OT plan used in [SF]$^2$M.
    \item We validate our proposed method on synthetic distributions and in several single-cell dynamics problems. 
\end{itemize}

\section{PRELIMINARIES}\label{sec:prelim}

We consider a pair of compactly supported distributions over $\R^d$ with (unknown) densities $q(x_0)$ and $q(x_1)$ (also denoted $q_0,q_1$). We assume access to a finite dataset of samples from $q_0$ and $q_1$. The problem of continuous-time stochastic generative modeling, or SDE inference, consists in finding a stochastic mapping $f$ that transforms $q_0$ to $q_1$. Samples from $q_1$ can then be generated by drawing a sample from $q_0$ and applying $f$ to obtain a sample from $q_1$.

\subsection{Schr\"odinger bridges via entropic OT}

The Schr\"odinger bridge problem asks to find most likely evolution between two probability measures $q_0$ and $q_1$ with respect to a reference stochastic process $\mathbb{Q}$. Formally, the Schr\"odinger bridge is the solution of:
\begin{equation}\label{eq:sb}
    \mathbb{P}^\star = \underset{\mathbb{P}: p_0 = q_0, p_1 = q_1}{\operatorname{arg\ min}} \operatorname{KL}(\mathbb{P}\,\|\,\mathbb{Q}),
\end{equation}
where $\mathbb{P}$ is a stochastic process (distribution over continuous paths $[0,1]\to\R^d$) with law $p$ (with marginals denoted $p_t$). 

\paragraph{SDEs and diffusion processes.} The stochastic processes we consider can be represented as It\^o SDEs of the form $dx = u_t(x)\,dt + g(t)\,dw_t$, where $u_t$ is a smooth vector field and $dw_t$ a Brownian motion. A density $p(x_0)$ evolved according to a SDE induces marginal distributions $p_t(x_t)$, viewed as a function $p:[0,1]\times\R^d\to\R_+$. They are characterized by the initial conditions $p_0$ and the \emph{Fokker-Planck equation} 
    $\partial_t p_t=-\nabla\cdot(p_tu_t)+\frac{g^2(t)}{2}\Delta p_t,$
where $\Delta p_t=\nabla\cdot(\nabla p_t)$ is the Laplacian.

In this work, we consider $\mathbb{Q} = \sigma \mathbb{W}$, where $\mathbb{W}$ is the standard Brownian motion defined by the SDE $dx=dw_t$, in which case the solution to (\ref{eq:sb}) is known as the \textit{diffusion Schr\"odinger bridge}~\citep{de_bortoli_diffusion_2021,bunne_recovering_2022}. We refer to \cite{leonard_path_measures, leonard_schrodinger} for a full discusssion on Schr\"odinger bridges. 

\paragraph{Entropically-regularized optimal transport.} The entropic OT problem is defined as follows:
\begin{align}\label{eq:eot}
    &\operatorname{\pi}_\varepsilon^*(q_0, q_1) = \\
    &\underset{\pi \in U(q_0, q_1)}{\arg\,\min} \int d(x_0,x_1)^2\, d\pi(x_0,x_1) + \varepsilon \operatorname{KL}(\pi \| q_0 \otimes q_1), \nonumber
\end{align}
where $U(q_0, q_1)$ is the set of admissible transport plans (joint distributions over $x_0$ and $x_1$ whose marginals are equal to $q_0$ and $q_1$), $d(\cdot,\cdot)$ is the ground cost, $\varepsilon$ is the regularization parameter, and $q_0\otimes q_1$ is the joint distribution over $x_0,x_1$ in which $x_0$ and $x_1$ are independent.
When $\varepsilon \rightarrow 0$, we recover \emph{exact optimal transport}. 
We now recall a cornerstone theorem that connects the SB problem to the entropic OT plan:
\begin{proposition}[\cite{Follmer1988}]\label{prop:sb_is_mixture_of_bb}
    Let the reference process be a Brownian motion (\emph{\ie,} $\mathbb{Q} = \sigma \mathbb{W}$). Then the Schr\"odinger bridge problem admits a unique solution $\sP^*$ having the form of a mixture of Brownian bridges weighted by an entropic OT plan:
    \begin{equation}\label{eq:bb_mixture}
        \!\!\!\!\!\!\!\sP^*((x_t)_{t\in[0,1]})\!=\!\int \sQ((x_t)_{t}\mid   x_0,x_1)\,d\pi^\star_{2 \sigma^2}(x_0, x_1)
    \end{equation}
    where $\sQ((x_t)_{t\in(0,1)}\mid x_0,x_1)$ is the Brownian bridge between $x_0$ and $x_1$ with diffusion rate $\sigma$.
\end{proposition}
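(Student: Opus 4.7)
The strategy is the classical disintegration argument: factor every candidate path measure $\sP$ into its endpoint distribution $\pi_\sP$ on $\R^d\times\R^d$ and its conditional law $\sP^{x_0,x_1}$ on paths pinned at $x_0,x_1$, then use the chain rule for KL to split the optimization over the two factors.

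First I would disintegrate $\sQ=\sigma\sW$ (taking a reversible/Lebesgue initialization so that the transition kernel is meaningful) as $\sQ=\int \sQ^{x_0,x_1}\,d\pi_\sQ(x_0,x_1)$, where $\sQ^{x_0,x_1}$ is, by the classical bridge construction, precisely the Brownian bridge with diffusion rate $\sigma$ between $x_0$ and $x_1$, and $\pi_\sQ$ has density proportional to $\exp(-\|x_1-x_0\|^2/(2\sigma^2))$ with respect to $dx_0\,dx_1$. For any competitor $\sP$ with $p_0=q_0$, $p_1=q_1$, the chain rule gives
\begin{equation*}
\operatorname{KL}(\sP\,\|\,\sQ)=\operatorname{KL}(\pi_\sP\,\|\,\pi_\sQ)+\int \operatorname{KL}(\sP^{x_0,x_1}\,\|\,\sQ^{x_0,x_1})\,d\pi_\sP(x_0,x_1).
\end{equation*}

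Next, I would observe that the constraints $p_0=q_0,\,p_1=q_1$ are constraints on $\pi_\sP$ alone (its marginals must equal $q_0,q_1$), leaving the conditionals $\sP^{x_0,x_1}$ unconstrained. Hence for any fixed admissible $\pi_\sP$, the inner integral is minimized by the pointwise choice $\sP^{x_0,x_1}=\sQ^{x_0,x_1}$, which drives it to zero. This reduces the SB problem to the static problem
\begin{equation*}
\min_{\pi\in U(q_0,q_1)}\operatorname{KL}(\pi\,\|\,\pi_\sQ).
\end{equation*}

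Then I would identify this static problem with entropic OT at regularization $\varepsilon=2\sigma^2$. Writing out $\log\pi_\sQ$ and using that $\pi$ has prescribed marginals $q_0,q_1$, a direct computation gives
\begin{equation*}
\operatorname{KL}(\pi\,\|\,\pi_\sQ)=\frac{1}{2\sigma^2}\int\|x_0-x_1\|^2\,d\pi+\operatorname{KL}(\pi\,\|\,q_0\otimes q_1)+C,
\end{equation*}
where $C$ collects terms depending only on $q_0,q_1$ (namely the entropies of the marginals and the Gaussian normalization), and is therefore constant on $U(q_0,q_1)$. Multiplying by $2\sigma^2$ shows that the minimizer coincides with $\pi^\star_{2\sigma^2}(q_0,q_1)$ as defined in \eqref{eq:eot}. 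Substituting back into the disintegration yields the claimed mixture-of-bridges representation \eqref{eq:bb_mixture}; uniqueness follows from strict convexity of both $\operatorname{KL}(\cdot\,\|\,\pi_\sQ)$ and the conditional KL, which jointly give strict convexity of $\operatorname{KL}(\cdot\,\|\,\sQ)$ on the affine constraint set.

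\textbf{Main obstacle.} The one nontrivial point is the handling of the reference measure: $\sigma\sW$ initialized from Lebesgue is $\sigma$-finite but not a probability measure, so the disintegration and the chain rule must be justified carefully (standard in the $\sigma$-finite setting, and the constant $C$ above absorbs the formally divergent normalization). The rest of the argument — tensorization of KL, the pointwise minimization over conditionals, and the algebraic identification with entropic OT at $\varepsilon=2\sigma^2$ — is routine once this is in place.
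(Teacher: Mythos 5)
The paper offers no proof of this proposition: it is imported verbatim from \citet{Follmer1988} (with pointers to L\'eonard's surveys for details), so there is no in-paper argument to compare yours against. Your proposal is, in substance, the standard proof from that literature, and it is correct: the chain rule for relative entropy splits $\operatorname{KL}(\sP\,\|\,\sQ)$ into a static term $\operatorname{KL}(\pi_\sP\,\|\,\pi_\sQ)$ plus an averaged conditional term; the endpoint constraints touch only $\pi_\sP$, so the conditional term is annihilated by taking $\sP^{x_0,x_1}=\sQ^{x_0,x_1}$ (the Brownian bridge with rate $\sigma$); and since the reference transition kernel is $\gN(x_1;x_0,\sigma^2 I)$, expanding $\operatorname{KL}(\pi\,\|\,\pi_\sQ)$ and absorbing the marginal entropies (constant on $U(q_0,q_1)$) yields the entropic OT objective at precisely $\varepsilon=2\sigma^2$, matching $\pi^\star_{2\sigma^2}$ in the statement. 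Two points merit slightly more care than you give them. First, the reference-measure issue you flag is real but is more cleanly handled by initializing $\sQ$ at $q_0$ (F\"ollmer's original choice) rather than at Lebesgue measure: every measure then remains a probability measure, the pinned conditionals $\sQ^{x_0,x_1}$ are unchanged, and the $\log q_0$ term simply joins the constant $C$, so no $\sigma$-finite machinery is needed. Second, strict convexity gives uniqueness but not existence; you should additionally note that the static entropic OT problem attains its minimum (e.g., by lower semicontinuity of KL and weak compactness of $U(q_0,q_1)$ for the compactly supported marginals assumed in \autoref{sec:prelim}, together with finiteness of the value at $\pi=q_0\otimes q_1$). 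With those two remarks your argument is complete and is essentially the proof the cited references give.
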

Motivated by this result, the algorithm we propose stochastically regresses the parameters defining an unconditional SDE to those defining Brownian bridges.

\subsection{Neural SDEs and probability flows}

In this section, we consider an SDE $dx = u_t(x)\,dt + g(t)\,dw_t$. We review some important properties and discuss the approximation of $u_t$ by a neural network.

\paragraph{Score and flow parametrization.}
In the degenerate case $g(t)\equiv0$, an SDE becomes an ODE and the Fokker-Planck equation recovers the \emph{continuity equation} $\frac{\partial p}{\partial t}=-\nabla\cdot(p_tu_t)$. From the Fokker-Plank and continuity equations, it can easily be derived that the ODE
\begin{equation}\label{eq:pfode}
    dx=\underbrace{\left[u_t(x)-\frac{g(t)^2}{2}\nabla\log p_t(x)\right]}_{\nothat u_t(x)}\,dt,\vspace*{-2mm}
\end{equation}
together with a distribution over initial conditions $p(x_0)$, induces the same marginal distributions $p_t(\cdot)$ as the SDE; therefore, (\ref{eq:pfode}) is called the \emph{probability flow ODE} of the stochastic process. Conversely, if the probability flow ODE's drift $\nothat u_t(x)$, the diffusion schedule $g(\cdot)$, and the \emph{score function} $\nabla\log p_t(x)$ are known, then the SDE's drift term can be recovered via
\begin{equation}\label{eq:sde_from_sf}
    u_t(x)=\nothat u_t(x)+\frac{g(t)^2}{2}\nabla\log p_t(x).
\end{equation}
Therefore, \textbf{specifying an SDE is tantamount to specifying the probability flow ODE and its score function}. By reversing the sign of $\nothat u_t$ in the ODE (\ref{eq:pfode}) and converting it to an SDE using (\ref{eq:sde_from_sf}), we also get the time reversal formula from \citet{anderson_1982}:
\begin{align}\label{eq:reverse_sde}
    dx&=\left[-\nothat u_t(x)+\frac{g(t)^2}{2}\nabla\log p_t(x)\right]dt+g(t)\,dw_t \nonumber
    \\&=\left[-u_t(x)+g(t)^2\nabla\log p_t(x)\right]dt+g(t)\,dw_t,
\end{align}
which induces the same distribution on $x_{1-t}$ as the original SDE does on $x_t$.

\paragraph{Approximating SDEs with neural networks.} If the marginal $p_t(x)$ can be tractably sampled and one knows the probability flow ODE's drift  $\nothat u_t(x)$ and score $\nabla\log p_t(x)$, both can be approximated by neural networks. Specifically, time-varying vector fields $v_\theta(\cdot,\cdot):[0,1]\times\R^d\to\R^d$ and $s_\theta(\cdot,\cdot):[0,1]\times\R^d\to\R_d$ can be trained with the \emph{(unconditional) score and flow matching} objective\vspace*{-1mm}
\begin{align}\label{eq:uncond_sf2m}
    &\gL_{\rm U[SF]^2M}(\theta) = \\
    &\E\big[\underbrace{\|v_\theta(t,x)-\nothat u_t(x)\|^2}_{\text{flow matching loss}}+\lambda(t)^2\underbrace{\|s_\theta(t,x)-\nabla\log p_t(x)\|^2}_{\text{score matching loss}}\big],\nonumber\vspace*{-1mm}
\end{align}
where the expectation is over $t\sim\gU(0,1),x\sim p_t(x)$ and $\lambda(\cdot)$ is some choice of positive weights. (In practice, it can be more stable to approximate $g(t)^2\nabla\log p_t(x)$ rather than  $\nabla\log p_t(x)$, a simple parametrization change that does not change the learning problem or the objective.) Once trained, $v_\theta$ and $s_\theta$ can be used to simulate the SDE from source samples $x_0$. This procedure is described in \autoref{alg:sf2m-inference}.

Remarkably, with a separate parametrization of the probability flow ODE and the score, we can simulate the SDE at inference time with an arbitrary diffusion rate $g(\cdot)$ that need not match the one used at training time. If the global optimum of (\ref{eq:uncond_sf2m}) is attained (\autoref{fig:figure_one}), we are ensured to obtain samples from the same marginals for any arbitrary diffusion rate $g(\cdot)$. For example, we can simulate the probability flow ODE by setting $g(t)\equiv0$. Similarly, the backward SDE can be simulated starting at samples $x_1$ using the time reversal formula (\ref{eq:reverse_sde}).

\paragraph{ODEs and SDEs for Brownian bridges.} For processes whose marginals $p_t(x)$ are Gaussian, Theorem 3 of \citet{lipman_flow_2022} or Theorem 2.1 of \citet{tong_conditional_2023} yield expressions for the flow and score. The main case of interest is the Brownian bridge from $x_0$ to $x_1$ with constant diffusion rate $g(t)=\sigma$. The marginals are given by $p_t(x)=\gN(x;tx_1+(1-t)x_0,\sigma^2t(1-t))$, and the ODE and score are computed using the aforementioned result:
\begin{align}\label{eq:brownian_bridge_flow}
    &\nothat u_t(x)=\frac{1-2t}{t(1-t)}(x-(tx_1+(1-t)x_0))+(x_1-x_0), \nonumber\\
    &\quad\nabla\log p_t(x)=\frac{tx_1+(1-t)x_0-x}{\sigma^2 t(1-t)}.
\end{align}
The Schr\"odinger bridge approximation algorithm we will propose leverages fast solutions to the entropy-regularized OT problem (\ref{eq:eot}) and the closed-form $\nothat{u}_t$ and $\nabla \log p_t$ of Brownian bridges (\ref{eq:brownian_bridge_flow}).

\section[Simulation-free SDE training]{SIMULATION-FREE SDE TRAINING}\label{sec:method}

We next describe our simulation-free method to learn SDEs through score and flow matching, summarized in \autoref{alg:sf2m-train}. We present the general case in \autoref{sec:alg:general}, then consider the Schr\"odinger bridge case in \autoref{sec:solving_sb}.

\subsection{Matching the conditional flow and score}\label{sec:alg:general}

\citet{tong_conditional_2023} described a simulation-free stochastic regression objective, conditional flow matching (CFM), that fits an ODE generating marginal distributions given by mixtures of simpler probability paths. We generalize CFM to matching stochastic dynamics.

Suppose the stochastic process $\sP((x_t)_{t\in[0,1]})$, with marginals $p_t(x)$, is a mixture over a latent variable $z$ with density $q(z)$, \ie,
\begin{equation}\label{eq:pt_mixture}
    \sP((x_t)_{t\in[0,1]})=\int \sP((x_t)|z)q(z)\,dz.
\end{equation}
Suppose that $\sP((x_t)|z)$ is defined by the SDE $dx=u_t(x|z)\,dt+g(t)\,dw_t$ with initial conditions $p_0(x|z)$, and let $\nothat u_t(x|z)$ be drift of the corresponding probability flow ODE given by (\ref{eq:pfode}). One then has expressions for the probability flow ODE and score that generate the marginals of the process $\sP$ given initial conditions $p_0(x)=\int_{z}p_0(x|z)q(z)\,dz$:
\begin{align}\label{eq:sde_for_mixture}
    \nothat u_t(x)&=\E_{q(z)}\frac{\nothat u_t(x|z)p_t(x|z)}{p_t(x)},\nonumber \\
    \quad\nabla\log p_t(x)&=\E_{q(z)}\left[\frac{p_t(x|z)}{p_t(x)}\nabla\log p_t(x|z)\right].
\end{align}
To be precise, we generalize Theorem 3.1 from \citet{tong_conditional_2023} to stochastic settings:
\begin{theoremE}[][end,restate]\label{thm:score_and_flow_of_mixture}
    Under mild regularity conditions, the ODE $dx=\nothat u_t(x)\,dt$ generates the marginals $p_t$ of $\sP$ from initial conditions $p_0$, and the score is given by (\ref{eq:sde_for_mixture}). The SDE $dx=[u_t^\circ(x) + \frac12 g(t)^2 \nabla\log p_t(x)]\,dx+g(t)\,dt$ generates the Markovization of $\sP$.
\end{theoremE}
\begin{proofE}
We assume $u_t(x|z)$, $\nabla\log p_t(x|z)$, and their first derivatives are continuous in $x$, uniformly in $z$. The first statement is Theorem 3.1 from \citet{tong_conditional_2023}, but we reproduce the key derivation here for completeness:
\begin{align*}
\frac{d}{dt} p_t(x)
&= \frac{d}{dt} \int p_t(x | z)q(z) dz \\
&= \int \frac{d}{dt} \left (p_t(x | z)q(z) \right ) dz \\
&= -\int \nabla\cdot \left (\nothat u_t(x | z) p_t(x | z)q(z) \right ) dz \\
&= -\nabla\cdot \left ( \int \nothat u_t(x | z) p_t(x | z)q(z)  dz \right )\\
&= -\nabla\cdot \left (\nothat u_t(x) p_t(x) \right ),
\end{align*}
showing that $p_t$ and $\nothat u_t$ satisfy the continuity equation $\frac{d}{dt} p_t(x) +\nabla\cdot \left (\nothat u_t(x) p_t(x) \right ) = 0$, which implies that $\nothat u_t$ generates $p_t$ from initial conditions $p_0$.

To show the score is given by the expression in (\ref{eq:sde_for_mixture}), using that $p_t(x)=E_{q(z)}p_t(x|z)$, we have
\begin{align*}
\nabla\log p_t(x)
&=\frac{\nabla\E_{q(z)}[p_t(x|z)]}{p_t(x)}\\
&=\frac{\E_{q(z)}[\nabla p_t(x|z)]}{p_t(x)}\\
&=\E_{q(z)}\left[\frac{p_t(x|z)\nabla\log p_t(x|z)}{p_t(x)}\right],
\end{align*}
as desired.

The Markovization of a mixture of SDEs $\sP$ with a common diffusion rate $g(t)$ is an SDE $dx=u_t(x)\,dt+g(t)\,dw$, where
\[u_t(x_t)=\lim_{\Delta t\to0}\E_{x_{t+\Delta t}\sim\sP(x_{t+\Delta t}\mid x_t)}\left[\frac{x_{t+\Delta t}-x_t}{\Delta t}\right].\]
Using the definition of $\sP$ as a mixture in (\ref{eq:pt_mixture}), we decompose this expectation over the posterior $p(z\mid x_t)=\frac{p_t(x_t\mid z)q(z)}{p_t(x_t)}$:
\begin{align*}
    \lim_{\Delta t\to0}\E_{x_{t+\Delta t}\sim\sP(x_{t+\Delta t}\mid x_t)}\left[\frac{x_{t+\Delta t}-x_t}{\Delta t}\right]
    &=
    \lim_{\Delta t\to0}\E_{z\sim p(z\mid x_t)}\E_{x_{t+\Delta t}\sim\sP(x_{t+\Delta t}\mid x_t,z)}\left[\frac{x_{t+\Delta t}-x_t}{\Delta t}\right]\\
    &=
    \E_{z\sim p(z\mid x_t)}\lim_{\Delta t\to0}\E_{x_{t+\Delta t}\sim\sP(x_{t+\Delta t}\mid x_t,z)}\left[\frac{x_{t+\Delta t}-x_t}{\Delta t}\right]\\
    &=
    \E_{z\sim p(z\mid x_t)}\left[u_t(x_t\mid z)\right].
\end{align*}
The score and probability flow defined in (\ref{eq:sde_for_mixture}) can equivalently be expressed as
\[
    u_t^\circ(x_t)=\E_{z\sim p(z\mid x_t)}u_t^\circ(x_t\mid z),\quad\nabla\log p_t(x_t)=\E_{z\sim p(z\mid x_t)}\nabla\log p_t(x_t\mid z),
\]
which, together with $u_t(x\mid z)=u_t^\circ(x\mid z)+\frac12g(t)^2\nabla\log p_t(x\mid z)$, implies that
\[u_t(x)=\E_{z\sim p(z\mid x)}\left[u_t(x\mid z)\right]=u_t^\circ(x)+\frac12g(t)^2\nabla\log p_t(x),\]
as desired.
\end{proofE}
We emphasize that, in general, the SDE in \autoref{thm:score_and_flow_of_mixture} does not recover $\sP$, but only its Markovization (\ie, the process with the same infinitesimal transition kernel). A process $\sP$ of the form (\ref{eq:pt_mixture}) is not necessarily Markovian and may not be generated by any SDE.

\paragraph{A stochastic regression objective.} The marginal ODE drift and score expressions in (\ref{eq:sde_for_mixture}) motivate objectives for fitting $\nothat u_t(x)$ and $\nabla\log p_t(x)$ with neural networks when only the \emph{conditional} ODEs and scores are known. Generalizing (\ref{eq:uncond_sf2m}), we define the \emph{(conditional) simulation-free score and flow matching} objective ([SF]$^2$M) for neural networks $v_\theta(\cdot,\cdot)$ approximating the ODE drift and $s_\theta(\cdot,\cdot)$ approximating the score:
\begin{align}\label{eq:cond_sf2m}
    \gL_{\rm [SF]^2M}(\theta)&= \E_{Q^\prime}\underbrace{\|v_\theta(t,x)-\nothat u_t(x|z)\|^2}_{\text{conditional flow matching loss}}\\
    &+\E_{Q^\prime}\lambda(t)^2\underbrace{\|s_\theta(t,x)-\nabla\log p_t(x|z)\|^2}_{\text{conditional score matching loss}},\nonumber
\end{align}
where, as in (\ref{eq:uncond_sf2m}), $\lambda(\cdot)$ is some choice of positive weights and $Q^\prime = (t\sim\gU(0,1)) \otimes q(z) \otimes p_t(x|z)$. This objective can be used to approximate the quantities defined in (\ref{eq:sde_for_mixture}), provided the conditional ODEs and scores are known and $p_t(x|z)$ can be tractably sampled. Correctness is guaranteed by the following Theorem:
\begin{theoremE}[Equality of conditional gradients][end,restate]\label{thm:sf2m_correctness}
    If $p_t(x)>0$ for all $x\in\R^d$ and $t\in[0,1]$, then $\nabla_\theta\gL_{\rm U[SF]^2M}(\theta)=\nabla_\theta\gL_{\rm [SF]^2M}(\theta)$, where $\gL_{\rm U[SF]^2M}(\theta)$ is the unconditional score and flow matching loss (\ref{eq:uncond_sf2m}).
\end{theoremE}
\begin{proofE}
    We show this individually for the flow matching and score matching parts of the losses. (Note that the equality of gradients of the flow matching parts of U[SF]$^2$M and [SF]$^2$M is equivalent to Theorem 3.2 of \citet{tong_conditional_2023}.)

    We claim that for any conditional vector field $\nothat w_t(x|z)$ and $\nothat w_t(x):=\E_{q(z)}\frac{p_t(x|z)}{p_t(x)}\nothat w_t(x|z)$, and approximating vector field $w_\theta(t,x)$, under some regularity conditions on $w_t(x|z)$, we have 
    \begin{equation}\label{eq:w_general_loss}
    \nabla_\theta\E_{z\sim q(z),x\sim p_t(x|z)}\left[\|w_\theta(t,x)-\nothat w_t(x|z)\|^2\right]
    =
    \nabla_\theta\E_{x\sim p_t(x)}\left[\|w_\theta(t,x)-\nothat w_t(x)\|^2\right].
    \end{equation}
    Assuming this claim, the theorem would follow from applying the claim to $\nothat w_t(x|z)=\nothat u_t(x|z)$ and to $\nothat w_t(x|z)=\nabla p_t(x|z)$ for every value of $t$, noting that in these cases $\nothat w_t(x)=\nothat u_t(x)$ and $\nothat w_t(x)=\nabla p_t(x)$, respectively, by \autoref{thm:score_and_flow_of_mixture}, and integrating over $t$.

    We proceed to prove the claim. We drop the distributions in the expectations for conciseness, noting that because $p_t(x)=\E_{z\sim q(z)}p(x|z)$, no ambiguity is caused: the marginal distribution over $x$ that stands under the expectation is the same in $\E_{x\sim p_t(x)}$ and in $\E_{z\sim q(z),x\sim p_t(x|z)}$.
\begin{align*}
&\nabla_\theta\left(\E_{z,x}\left[\|w_\theta(t,x)-\nothat w_t(x|z)\|^2\right]-\E_{x}\left[\|w_\theta(t,x)-\nothat w_t(x)\|^2\right]\right)
\\
=\,&
\nabla_\theta\left(-2\E_{z,x}\langle w_\theta(t,x),\nothat w_t(x|z)\rangle+2\E_x\langle w_\theta(t,x),\nothat w_t(x)\rangle\right)
\end{align*}
where we rewrote the squared norms as inner products and used that $\nothat w_t(x|z),\nothat w_t(x)$ are independent of $\theta$ and that $w_\theta(t,x)$ is independent of $z$. To conclude, we compute
\begin{align*}
\E_{x}\left\langle w_\theta(t,x),\nothat w_t(x)\right\rangle
&=
\iint \left\langle w_\theta(t,x),\nothat w_t(x)\right\rangle p_t(x)\,dx
\\
&=
\int \left\langle w_\theta(t,x),\int\frac{p_t(x|z)}{p_t(x)}\nothat w_t(x|z)q(z)\,dz\right\rangle p_t(x)\,dx\\
&=
\iint\left\langle w_\theta(t,x),\nothat w_t(x|z)\right\rangle p_t(x|z)q(z)\,dx\,dz
\\
&=
\E_{z,x}\left\langle w_\theta(t,x),\nothat w_t(x|z)\right\rangle,
\end{align*}
showing that the difference of gradients vanishes. Note that the above derivation required exchanging the order of integration, which requires some assumptions of regularity at infinity. (Absolute convergence of the integrals is sufficient, and in particular, guaranteed by polynomial growth of in $x$ of $w_\theta$ $\nothat w_t(x|z)$ and exponential decay of $p_t(x|z)$ uniformly in $z$.)
\end{proofE}
This result generalizes Theorem 3.2 of \citet{tong_conditional_2023}. It provides a simulation-free way to train neural networks sufficient to simulate an SDE generating marginals $p_t(x)$ with arbitrary diffusion rate $g(\cdot)$ (cf.\ the discussion following (\ref{eq:uncond_sf2m})). The training and inference algorithms are summarized in \autoref{alg:sf2m-train}, \autoref{alg:sf2m-inference}.

In our approach, the SDE recovered from the ODE and score defined via (\ref{eq:sde_for_mixture}) is the Markovization of the mixture of stochastic processes indexed by $z$. 

\begin{algorithm}[t]
\caption{Score and Flow Matching Training}
\label{alg:sf2m-train}
\begin{algorithmic}
\State {\bfseries Input:} Efficiently samplable $q(z)$, $p_t(x | z)$, computable $u_t(x | z)$, initial networks $v_{\theta}$ and $s_\theta$.
\While{Training}
\State $z \sim q(z); \quad t \sim \mathcal{U}(0, 1)$; \quad $x \sim p_t(x | z)$
\State \hbox{$\mathcal{L}_{\rm [SF]^2M} \gets \| v_\theta(t, x) - \nothat u_t(x | z)\|^2$}\hbox{\quad$+\,\lambda(t)^2 \|s_\theta(t, x) - \nabla_{x}\log p_t(x|z)||^2$}\vskip-1.2em
\Comment {see (\ref{eq:cond_sf2m})}
\State $\theta \gets \mathrm{Update}(\theta, \nabla_\theta \mathcal{L}_{\rm [SF]^2M})$
\EndWhile
\State \Return $v_\theta,s_\theta$
\end{algorithmic}
\end{algorithm}

\paragraph{Sources of conditional ODEs and scores.} 
Although the [SF]$^2$M framework can handle general conditioning information $z$, in this paper we consider the case where $z$ is identified with a pair $(x_0,x_1)$ of a source and target point. For a given $z=(x_0,x_1)$, we will take the conditional probability path $p_t(x|z)$ to be a Brownian bridge with constant diffusion scale $\sigma$, so that $\nothat u_t(x|z)$ and $\nabla\log p_t(x|z)$ are given by (\ref{eq:brownian_bridge_flow}). To avoid numerical issues for $t$ close to 0 or 1, we add a small smoothing constant to the variance. The conditional distributions are thus peaky at $x_0$ and $x_1$ at $t=0$ and $t=1$. (An extension to nonconstant diffusion scale is described in \autoref{sec:nonconstant_sigma}.)

For the resulting marginal $p_t(x)$ to satisfy the boundary conditions $p_0(x)=q_0(x)$ and $p_1(x)=q_1(x)$, $q(x_0,x_1)$ must be a coupling of $q_0$ and  $q_1$ (\ie, a transport plan). This is formalized in the following theorem:
\begin{theoremE}[{[SF]$^2$M} recovers marginals from bridges][end,restate]\label{thm:sf2m_generates_right_marginals}
    If $q(\cdot,\cdot) \in U(q_0, q_1)$ and $v^*_\theta, s^*_\theta$ globally minimize $\mathcal{L}_{\rm [SF]^2M}(\theta)$, the SDE with drift $[v^*_\theta + \frac12 g(t)^2 s^*_\theta]$ and diffusion $g$, and initial conditions $p_0=q_0$, is the Markovization of the mixture of Brownian bridges from $x_0$ to $x_1$ over $q(x_0,x_1)$. In particular, if the SDE generates marginals $p_t$, then $p_1=q_1$.
\end{theoremE}
\begin{proofE}
The probability flow drift $u_t^\circ$ and score $\nabla\log p_t$ of the mixture in question are given by \autoref{thm:score_and_flow_of_mixture} shows. By \autoref{thm:sf2m_correctness}, assuming sufficient regularity, optimization of $\mathcal{L}_{\rm [SF]^2M}(\theta)$ is equivalent to optimization of $\mathcal{L}_{\rm U[SF]^2M}(\theta)$, which is globally minimized when $v_\theta(t,x)=\nothat u_t(x)$ and $s_\theta(t,x)=\nabla\log p_t(x)$ for (Lebesgue,$p_t$)-almost all $t\in [0,1]$ and $x\in\R^d$. Moreover, `almost all' implies `all' if $\nothat u_t,\nabla\log p_t,v_\theta,s_\theta$ are continuous in $t$ and $x$ and $p_t$ has full support.

By hypothesis, the vector field $\nothat u_t(x)$ satisfies the continuity equation jointly with $p_t(x)=\E_{q(z)}p_t(x|z)$, which was assumed to satisfy $p_0(x)=q_0(x)$ and $p_1(x)=q_1(x)$. By the algebraic derivation in \autoref{sec:prelim}, $\nothat u_t(x)+\frac12g(t)^2\nabla\log p_t(x)$ and $p_t(x)$ jointly satisfy the Fokker-Planck equation. Assuming all derivatives appearing in the Fokker-Planck equation exist and are continuous everywhere, the given SDE generates marginal probabilities $p_t$ from initial conditions $p_0$.
\end{proofE}
This theorem tells us that as long as our joint distribution $q(x_0, x_1)$ has the correct marginals, [SF]$^2$M will recover a valid generative model which pushes $q_0$ to $q_1$.%

\subsection{Building Schr\"odinger bridges via {[SF]}$^2$M and entropic optimal transport}\label{sec:solving_sb}
In the previous section, we showed that our method, {[SF]}$^2$M, can approximate the marginal probability $p_t$ of a mixture of processes of the form (\ref{eq:pt_mixture}). In this section, we explain how our {[SF]$^2$M} approximates the SB.

\paragraph{{[SF]}$^2$M approximates the Schr\"odinger bridge.} %
In order to achieve an efficient approximation of the SB, we leverage \autoref{prop:sb_is_mixture_of_bb}. The SB can be expressed as a mixture of Brownian bridges weighted by an entropic optimal transport plan (\ref{eq:bb_mixture}). Therefore, to approximate the SB with {[SF]$^2$M}, we set the distribution $q(x_0, x_1)$ to be equal to the entropic OT plan $\pi^\star_{2 \sigma^2}(q_0, q_1)$ and train the networks $v_\theta$ and $s_\theta$ using \autoref{alg:sf2m-train}. We show that this procedure recovers the SB:
\begin{propositionE}[{[SF]$^2$M} with entropic OT recovers the SB process][end,restate]
    Let $\sP^*$ be the Schr\"odinger bridge between $q_0$ and $q_1$ with respect to $\sQ=\sigma\sW$. If $v^\star_\theta$, $s^\star_\theta$ globally minimize $\mathcal{L}_{\rm [SF]^2M}$, with coupling $\pi^\star_{2\sigma^2}( q_0, q_1)$, then $\sP^*$ is defined by the SDE with drift $[v^*_\theta + \frac12 g(t)^2 s^*_\theta]$, diffusion $g$, and initial conditions $p_0=q_0$.
    \label{prop:sf2M_equal_SB}
\end{propositionE}
\begin{proofE}
This is an immediate consequence of \autoref{thm:sf2m_generates_right_marginals}, which shows that the SDE learned by [SF]$^2$M is the Markovization of a mixture of Brownian bridges, and of \autoref{prop:sb_is_mixture_of_bb}, which characterizes the SB as a mixture of Brownian bridges with mixture weights given by the entropic OT plan.
\end{proofE}

\paragraph{Empirical approximation.} Unfortunately, the real distributions $q_0$ and $q_1$ are usually unknown and we only have access to \emph{i.i.d.}\ samples forming empirical distributions $\hat{q}_0$ and $\hat{q}_1$ of size $n$. Therefore, we can only approximate the true entropic OT plan by computing the entropic OT plan $\pi^\star_{2 \sigma^2}(\hat{q}_0, \hat{q}_1)$ between the empirical distributions~\citep{cuturi_sinkhorn_2013,AltschulerGreenkhorn2017}. This empirical OT plan can be used in {[SF]$^2$M} to construct an \emph{empirical} Schr\"odinger bridge. 

Fortunately, the true entropic OT can be efficiently approximated using empirical distributions, even in high-dimensional spaces \citep{genevay19a, Mena2019}, and it was recently shown that the Schr\"odinger bridge inherits this property \cite[Theorem 5]{stromme_sampling_2023}. In turn, the entropic OT plan between empirical distributions can be efficiently computed using the Sinkhorn algorithm \citep{cuturi_sinkhorn_2013}, which has $\gO(n^2)$  computational complexity \citep{AltschulerGreenkhorn2017}, or using stochastic algorithms \citep{GenevayStocEOT2016, seguy2018large}. However, if this cost is to high (\eg, if $n$ is too large or if one has the true generative process, as in the Gaussian-to-data setting), the plan can be further approximated using minibatch OT \citep{fatras_learning_2020, fatras_unbalanced_2021}; see \autoref{app:ot}. 

The use of an entropic OT plan and marginalization via stochastic regression distinguishes [SF]$^2$M from existing neural SB algorithms (\autoref{tab:algs_comparison}). Such past approaches include mean-matching~\citep[DSB and NLSB,][]{de_bortoli_diffusion_2021,koshizuka_neural_2022}, and bridge-matching approaches~\citep[DSBM and IDBM,][]{shi_diffusion_2023,peluchetti_diffusion_2023}, both of which require an outer iterative proportional fitting loop with an inner training loop. Others have studied the problem assuming paired source and target data \citep[I$^2$SB and ASB,][]{liu_sb_2023,somnath_aligned_2023}; SF$^2$M can be thought of as \emph{inferring} the pairing while jointly fitting the SDE. 

See \autoref{supp:sec:practical_sb} for further discussion of the implications of these choices and practical recommendations.

\begin{table}[]
    \caption{Comparison of SB algorithms (see \autoref{sec:experiments}). {[SF]$^2$M} is the first algorithm that does not assume paired samples, require SDE integration during training, or use an IPF outer loop. DSBM uses simulation only in the outer loop.}\vspace*{-1em}
    \centering
\resizebox{\linewidth}{!}{
     \begin{tabular}{@{}lcccc}
     \toprule
    Algorithm $\rightarrow$ & I$^2$SB/ASB & DSB/NLSB & DSBM/IDBM & {[SF]$^2$M} \\\midrule
       Unpaired samples     & \xmark & \cmark & \cmark & \cmark \\
       Bridge matching      & \cmark & \xmark & \cmark & \cmark \\
       Single loop          & \cmark & \xmark & \xmark & \cmark \\
       Sim.-free training  & \cmark & \xmark & \xmark/\cmark & \cmark \\
       No explicit $(x_0,x_1)$ pairing & \cmark & \cmark & \cmark & \xmark \\
       \bottomrule
    \end{tabular}
    }
    \label{tab:algs_comparison}
\end{table}

\section{LEARNING CELL DYNAMICS WITH {[SF]}$^2$M}\label{sec:alg:single-cell}

Modeling cell dynamics is a major open problem in single-cell data science, as it is important for understanding -- and eventually intervening in -- cellular programs of development and disease~\citep{lahnemann_eleven_2020}. In this section, we show how {[SF]}$^2$M can be used for and adapted to modeling single-cell dynamics.

The cellular dynamics between time-resolved snapshot data, representing observations of cells lying in the space of gene activations, are commonly modeled using Schr\"odinger bridges~\citep{hashimoto_learning_2016,schiebinger_optimal-transport_2019,bunne_recovering_2022,koshizuka_neural_2022}. The applicability of the SB formulation to cell dynamics relies upon the principle of least action, which is thought to hold for cellular systems over short timescales~\citep{schiebinger_reconstructing_2021}, and motivates our choice to apply {[SF]}$^2$M to these problems.%

\begin{figure}%
\centering
\includegraphics[width=0.4\textwidth]{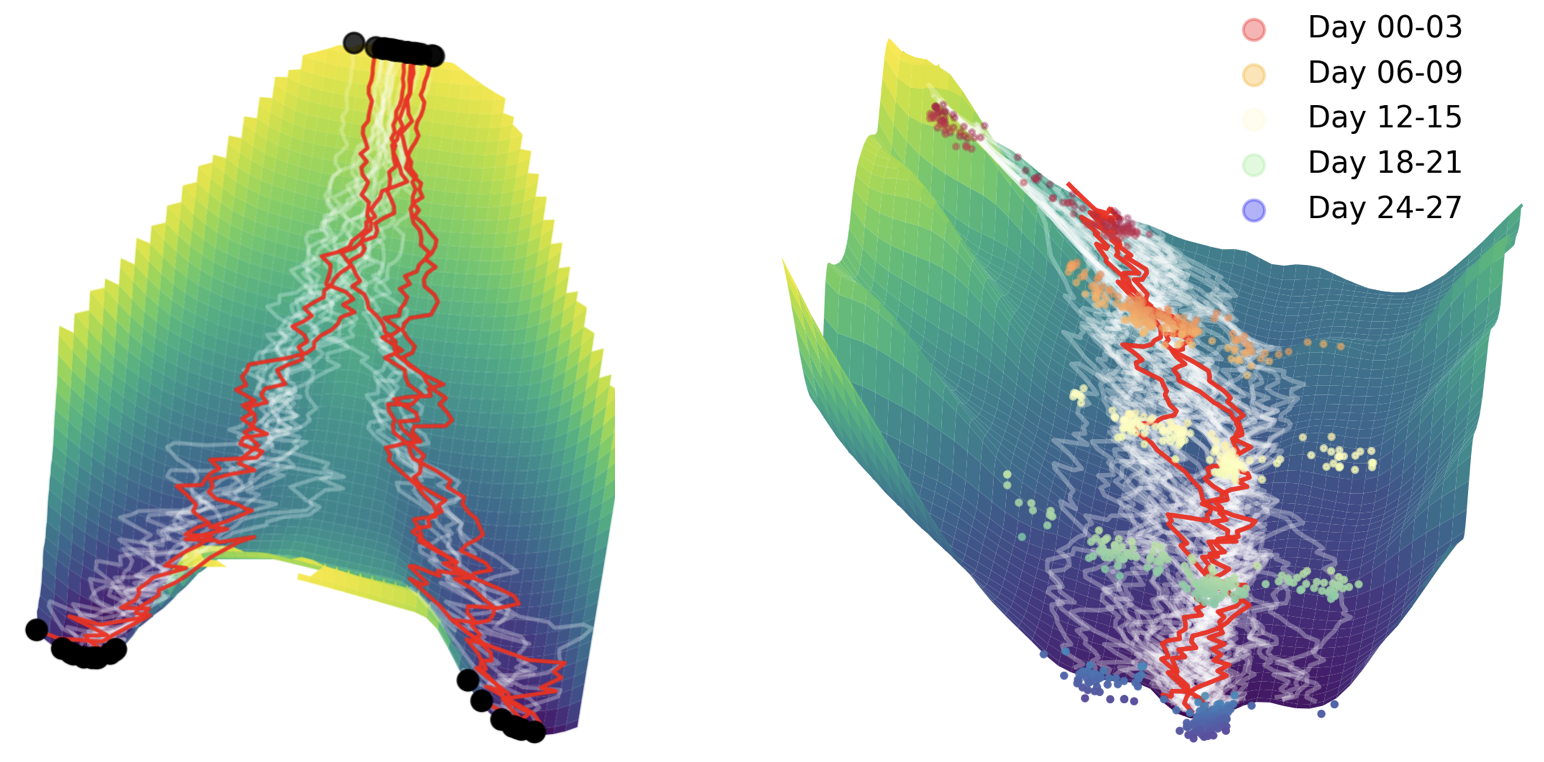}
\caption{Visualization of learned Waddington's landscape $W$ with a bifurcating trajectory for one Gaussian to two Gaussians (\textbf{left}) and for the Embryoid Body (EB) data
~\citep{moon_visualizing_2019} (\textbf{right}). The dimensions are space (left-right), time (forward-back), and potential (up-down). }
\label{fig:landscape}
\end{figure}

\paragraph{Learning flows on cell manifolds.}
Cells are thought to lie on a low-dimensional manifold in the space of gene expressions \citep{moon_manifold_2018}, which has motivated work on density-adhering regularizations~\citep{tong_trajectorynet_2020,koshizuka_neural_2022} and manifold embeddings~\citep{huguet_manifold_2022}. Because {[SF]}$^2$M can use a coupling between marginals $q(x_0,x_1)$ defined by entropic OT with an arbitrary cost function, we can take advantage of these embeddings to compute the pairing using a ground cost that is adapted to the geometry of the manifold.
Specifically, we use the Geodesic Sinkhorn method~\citep{huguet_geodesic_2022}, which computes the entropic OT plan with cost
\begin{equation}
    c_{\rm geo}(x_0, x_1) = \sqrt{- \log \mathcal{H}_t(x_0,x_1)}.
\end{equation}
The matrix $\mathcal{H}_t$ approximates the heat kernel defined via the Laplace-Beltrami operator on the manifold, efficiently approximated using a $k$-nearest-neighbour graph.
We find using this cost leads to more accurate trajectories in high dimensions (see \autoref{tab:sc-dynamics}).

\newcommand{\std}[1]{\scriptsize\rm$\pm$#1}

\begin{table*}[t]
    \caption{Two-dimensional data: generative modeling performance ($\mathcal{W}_2$) and dynamic OT optimality (NPE) divided into SDE methods (top) and ODE methods (bottom). {[SF]}$^2$M performs the best on 3 of 4 datasets and is similar to OT-CFM, which is equivalent to {[SF]}$^2$M as $g(t) \to 0$. *Indicates results taken from \citet{shi_diffusion_2023}.}\vspace*{-1em}
    \label{tab:twodim}
    \centering
    \resizebox{\linewidth}{!}{
\begin{tabular}{@{}lllllllll}
\toprule
Metric $\rightarrow$ & \multicolumn{4}{c}{$\mathcal{W}_2$ ($\downarrow$)} & \multicolumn{4}{c}{Normalized Path Energy ($\downarrow$)} \\\cmidrule(lr){2-5}\cmidrule(lr){6-9}
Algorithm  $\downarrow$ | Dataset $\rightarrow$ &                   ${\gN}\!{\rightarrow}$\ds{8gaussians} &             \ds{moons}$\rightarrow$\ds{8gaussians} &                       ${\gN}\!{\rightarrow}$\ds{moons} &                      ${\gN}\!{\rightarrow}$\ds{scurve} &                   ${\gN}\!{\rightarrow}$\ds{8gaussians} &             \ds{moons}$\rightarrow$\ds{8gaussians} &                       ${\gN}\!{\rightarrow}$\ds{moons} &                      ${\gN}\!{\rightarrow}$\ds{scurve} \\
\midrule
{[SF]}$^2$M-Exact &  \textbf{0.275\std{0.058}}&           0.726\std{0.137}&  \textbf{0.124\std{0.023}}&  \textbf{0.128\std{0.005}}&  \textbf{0.016\std{0.012}}&           0.045\std{0.031}&  \textbf{0.053\std{0.038}}&           0.034\std{0.024}\\
{[SF]}$^2$M-I &  0.393\std{0.054}&           1.482\std{0.151}&  0.185\std{0.028}&  0.201\std{0.062}&  0.160\std{0.019}&           2.577\std{0.323}&  0.855\std{0.130}&           0.845\std{0.106}\\
DSBM-IPF \citep{shi_diffusion_2023}*  & 0.315\std{0.079}& 0.812\std{0.092}& 0.140\std{0.006}& 0.140\std{0.024}& 0.022\std{0.020}& 0.244\std{0.027}& 0.383\std{0.034}& 0.297\std{0.036}\\
DSBM-IMF \citep{shi_diffusion_2023}*  & 0.338\std{0.091}& 0.838\std{0.098}& 0.144\std{0.024}& 0.145\std{0.037}& 0.029\std{0.017}& 0.345\std{0.049}& 0.230\std{0.028}& 0.286\std{0.033}\\
DSB \citep{de_bortoli_diffusion_2021}*        & 0.411\std{0.084}& 0.987\std{0.324}& 0.190\std{0.049}& 0.272\std{0.065}& ---& ---& ---& ---\\
\midrule
OT-CFM \citep{tong_conditional_2023}    &           0.303\std{0.043}&  \textbf{0.601\std{0.027}}&           0.130\std{0.016}&           0.144\std{0.028}&           0.031\std{0.027}&  \textbf{0.015\std{0.010}}&           0.083\std{0.009}&  \textbf{0.027\std{0.012}}\\
SB-CFM  \citep{tong_conditional_2023}    &           2.314\std{2.112}&                         ---&           0.434\std{0.594}&           0.341\std{0.468}&           1.000\std{0.000}&                         ---&           0.995\std{0.000}&           0.745\std{0.039}\\
RF \citep{liu_rectified_2022}       &           0.421\std{0.071}&           1.525\std{0.330}&           0.283\std{0.045}&           0.345\std{0.079}&           0.044\std{0.031}&           0.203\std{0.090}&           0.130\std{0.078}&           0.099\std{0.066}\\
I-CFM \citep{tong_conditional_2023}       &           0.373\std{0.103}&           1.557\std{0.407}&           0.178\std{0.014}&           0.242\std{0.141}&           0.202\std{0.055}&           2.680\std{0.292}&           0.891\std{0.120}&           0.856\std{0.031}\\
FM \citep{lipman_flow_2022}&           0.343\std{0.058}&                         ---&           0.209\std{0.055}&           0.198\std{0.037}&           0.190\std{0.054}&                         ---&           0.762\std{0.099}&           0.743\std{0.116}\\
\bottomrule
\end{tabular}
}
\end{table*}

\paragraph{Learning developmental landscapes.} 
A common model of cell development, known as Waddington's epigenetic landscape \citep{waddington_epigenotype_1942}, assumes that cells evolve and differentiate in the space of gene expressions by following (noisy) gradient ascent on an energy function. 
While a few heuristic methods have been proposed to approximate this energy function from single-cell data before~\citep{tang_potential_2017,qin_single-cell_2023}, we propose a novel approach to model the landscape directly. In our approach, the negative energy can be directly interpreted as an action potential, inspired by the modeling in \citet{neklyudov23a}.

To do this, we impose a Langevin dynamics parametrization on the flow and score in {[SF]}$^2$M: $v_\theta(t, x) = -\nabla_x E_v(t, x)$ and $s_\theta(t, x) = - \nabla_x E_s(t, x)$, where $E_v$ and $E_s$ are neural networks. We can define the Waddington's landscape by $W := E_v + \frac{1}{2} g(t)^2 E_s$. The drift of the SDE is then $u_t(x) = -\nabla_x W(t,x)$, meaning that the time-evolution of a cell follows gradient dynamics on $W$ with added Gaussian noise of scale $g(t)$. We visualize these landscapes in \autoref{fig:landscape} with further details in \autoref{supp:sec:exp_details:landscape}.

\paragraph{Learning gene regulatory networks.} 
Finally, we use {[SF]}$^2$M to learn gene regulatory networks from population snapshots of gene expressions, a persisting challenge in cellular biology \citep{pratapa2020benchmarking}. Following previous work in discovering sparse interaction structure from continuous-time systems~\citep{tank_neural_2021,aliee_beyond_2021,bellot_neural_2022,aliee2022sparsity,atanackovic2023dyngfn}, we define the \emph{gene regulatory network} as the directed graph whose vertices are genes (dimensions of the space) and an edge $i\rightarrow j$ is present if and only if $\frac{\partial (v_\theta(t, x))_j}{\partial x_i} \neq 0$. This directed graph is expected to be sparse.

Previous work resorted to performing inference of trajectories in a low-dimensional (and dense) representation~\citep{tong2023learning,bunne_proximal_2022}, which complicated the discovery of the sparse graph structure in gene space.  {[SF]}$^2$M is the first Schr\"odinger bridge method to scale to high dimensions. %
This allows us to learn a dynamic directly in the gene space and recover the sparse gene interactions. To accomplish this, we use a specialized parametrization of $v_\theta$, inspired by \citet{bellot_neural_2022}, which enables the graph structure to be read out from the sparsity pattern of the initial layer of the trained model (see \autoref{supp:sec:exp_details:grn} for details).

\section{RELATED WORK}\label{sec:rw}
\paragraph{Stochastic continuous-time modeling.}
Our framework is related to both flow-based~\citep{chen_neural_2018,grathwohl_ffjord:_2019,albergo_stochastic_2023,albergo_building_2023,neklyudov_action_2022,liu_rectified_2022} and score-based~\citep{sohl-dickstein_deep_2015,song_generative_2019,song_improved_2020,song_score-based_2021,ho_denoising_2020,winkler_score-based_2023,dhariwal_diffusion_2021,watson_broadly_2022} generative modeling. Both have drawn attention due to their stability and efficiency in training and high quality of generated samples. See \autoref{sec:further_rw} for further discussion.

\paragraph{Schr\"odinger bridge approximation methods.} While there is significant theoretical work on the SB problem~\citep{leonard_schrodinger,stromme_sampling_2023,albergo_stochastic_2023}, practical solutions have assumed paired samples from the Schr\"odinger bridge or required simulation during training. Algorithms based on iterative proportional fitting \citep[DSB and DSBM;][]{de_bortoli_diffusion_2021,shi_diffusion_2023} have the advantage of yielding the exact Schr\"odinger bridge if trained to optimality on each iteration, but may accumulate error with each outer-loop step due to underfitting and function approximation. On the other hand, our proposed {[SF]}$^2$M requires neither training-time integration nor outer-loop iteration and therefore will converge to the exact SB -- if the neural network function class and learning algorithm so allow -- but, unlike DSB and DSBM, require knowledge of the entropic OT plan and the conditional paths (see \autoref{tab:algs_comparison}).

The relative advantages of these algorithms and practical recommendations are further discussed in \autoref{sec:solving_sb}, \autoref{supp:sec:practical_sb}. Furthermore, in \autoref{sec:additional_results} we show that the simulation-based outer loop in \citet{peluchetti_diffusion_2023,shi_diffusion_2023} can be combined with {[SF]}$^2$M to improve the SB marginals at the cost of generative performance.

\paragraph{Applications to cell dynamics.}
When the observer seeks to recover dynamics from multiple snapshots with scRNA-seq data, the machinery of optimal transport can be used \citep{schiebinger_optimal-transport_2019, yang_predicting_2020,tong_trajectorynet_2020,bunne_proximal_2022,huguet_manifold_2022,bunne_recovering_2022,koshizuka_neural_2022}. However, these methods all require simulation during training, which scales poorly to high dimensions.

\section{EXPERIMENTS}\label{sec:experiments}

In this section we empirically evaluate {[SF]}$^2$M with respect to optimal transport, generative modeling, and single-cell interpolation criteria. We compare:
\begin{itemize}[nosep,left=0pt]
\item Minibatch {[SF]}$^2$M with exact OT minibatches ({[SF]}$^2$M-Exact), with entropic OT (Sinkhorn) minibatches (-Sink), with independent couplings (-I), and with Geodesic OT (-Geo) when applicable.
\item A variety of (ODE) flow-based models, including optimal transport conditional flow matching \citep[OT-CFM,][]{tong_conditional_2023}, rectified flow \citep[RF,][]{liu_rectified_2022}, and flow matching  \citep[FM,][]{lipman_flow_2022}.
\item Schr\"odinger bridge models: diffusion Schr\"odinger bridges  \citep[DSB,][]{de_bortoli_diffusion_2021} and diffusion Schr\"odinger bridge matching \citep[DSBM,][]{shi_diffusion_2023}, which is equivalent to work on iterated diffusion mixture transport \citep[IDBM,][]{peluchetti_diffusion_2023}. 
\item Single-cell dynamics models: Neural Lagrangian Schr\"odinger bridges \citep[NLSB,][]{koshizuka_neural_2022}, TrajectoryNet \citep{tong_trajectorynet_2020}.
\end{itemize}
See \autoref{supp:sec:exp_details} for all experiment details. All results are presented as mean $\pm$ std.\ over five seeds.

\begin{table}[t]
    \caption{Gaussian-to-Gaussian Schr\"odinger bridges with $10^4$ datapoints between a Gaussian with parameters estimated from empirical samples ($p_t$) with error to the continuous Schr\"odinger bridge marginals ($p_t^*$) either at the target distribution (left) or averaged across 21 timepoints (right).}\vspace*{-1em}
    \label{tab:gaussians}
    \centering
    \resizebox{\linewidth}{!}{
\begin{tabular}{@{}lllllll}
\toprule
Metric $\rightarrow$&\multicolumn{3}{c}{${\rm KL}(p_1, p_1^*)$}&\multicolumn{3}{c}{Mean ${\rm KL}(p_t, p_t^*)$} \\\cmidrule(lr){2-4}\cmidrule(lr){5-7}
Alg.\ $\downarrow$ | Dim. $\rightarrow$ &\multicolumn1c5&\multicolumn1c{20}&\multicolumn1c{50}&\multicolumn1c5&\multicolumn1c{20}&\multicolumn1c{50}\\
\midrule
{[SF]}$^2$M-Exact&0.007\std{0.000}&\textbf{0.029\std{0.002}}&\textbf{0.124\std{0.003}}&0.006\std{0.000}&\textbf{0.028\std{0.001}}&0.258\std{0.001}\\
DSBM-IPF&0.015\std{0.005}&0.132\std{0.004}&0.528\std{0.013}&\textbf{0.005\std{0.002}}&0.050\std{0.002}&\textbf{0.221\std{0.004}}\\
DSB&8.757\std{}---&49.963\std{}---&221.213\std{}---&8.757\std{}---&49.963\std{}---&221.213\std{}---\\
\midrule
SB-CFM&\textbf{0.001\std{0.000}}&0.034\std{0.003}&0.170\std{0.002}&0.008\std{0.000}&0.086\std{0.002}&0.447\std{0.003} \\
\bottomrule
\end{tabular}
}
\end{table}

\begin{table}[t]
\centering
\begin{minipage}[c]{\columnwidth}
\caption{Single-cell comparison over three datasets, averaged over leaving out different intermediate timepoints on 5 PCs. For each left-out point, we measure the 1-Wasserstein distance between the imputed and ground truth  distributions at the left-out time point, following \citet{tong_trajectorynet_2020}. *Indicates values taken from aforementioned work.}\vspace*{-1em}
\label{tab:sc-dynamics-fived}
\end{minipage}
\begin{minipage}[c]{\columnwidth}
\resizebox{1\linewidth}{!}
{
\begin{tabular}{@{}llll}
\toprule
Algorithm  $\downarrow$ | Dataset $\rightarrow$ &                       \multicolumn1c{Cite}&                          \multicolumn1c{EB} &                      \multicolumn1c{Multi}\\
\midrule
{[SF]}$^2$M-Geo&1.017\std{0.104}&0.879\std{0.148}&1.255\std{0.179}\\
{[SF]}$^2$M-Exact&\textbf{0.920\std{0.049}}&\textbf{0.793\std{0.066}}&\textbf{0.933\std{0.054}} \\
{[SF]}$^2$M-Sink&1.054\std{0.087}&1.198\std{0.342}&1.098\std{0.308} \\
DSBM \citep{shi_diffusion_2023}&1.705\std{0.160}&1.775\std{0.429}&1.873\std{0.631} \\
DSB \citep{de_bortoli_diffusion_2021}&0.953\std{0.140}&0.862\std{0.023}&1.079\std{0.117} \\
\midrule
OT-CFM \citep{tong_conditional_2023}&\textbf{0.882\std{0.058}}&\textbf{0.790\std{0.068}}&\textbf{0.937\std{0.054}} \\
I-CFM \citep{tong_conditional_2023}&0.965\std{0.111}&0.872\std{0.087}&1.085\std{0.099} \\
SB-CFM \citep{tong_conditional_2023}&1.067\std{0.107}&1.221\std{0.380}&1.129\std{0.363} \\
\midrule
Reg. CNF \citep{finlay_how_2020}*&---&0.825&--- \\
TrajectoryNet \citep{tong_trajectorynet_2020}*&---&0.848&--- \\
NLSB \citep{koshizuka_neural_2022} & --- & 0.970 & ---\\
\bottomrule
\end{tabular}
}
\end{minipage}
\end{table}

\begin{table}[t]
 \caption{Leave-one-timepoint-out testing of dynamics interpolation methods measuring the error between the predicted and ground truth left out timepoint using the 1-Wasserstein distance. We test on 50 and 100 principal components as well as 1000 highly variable genes.}\vspace*{-1em}
    \label{tab:sc-dynamics}
    \centering
    \resizebox{\linewidth}{!}{
\begin{tabular}{@{}llllllll}
\toprule
 Dim. $\rightarrow$ &  \multicolumn{2}{c}{50} & \multicolumn{2}{c}{100} & \multicolumn{2}{c}{1000}\\
 \cmidrule(lr){2-3}\cmidrule(lr){4-5}\cmidrule(lr){6-7}
 Alg.\ $\downarrow$ | Dataset $\rightarrow$ &                         \multicolumn1c{Cite} &                                                \multicolumn1c{Multi} &                         \multicolumn1c{Cite} &                                                 \multicolumn1c{Multi}&                        \multicolumn1c{Cite}&                                               \multicolumn1c{Multi}\\
\midrule
{[SF]}$^2$M-Geo                       & \textbf{38.52\std{0.29}} &    \textbf{44.80\std{1.91}} &  \textbf{44.50\std{0.42}} &  \textbf{52.20\std{1.96}} & \textbf{40.09\std{1.53}} &  \textbf{51.29\std{0.09}}\\
{[SF]}$^2$M-Exact                     & 40.01\std{0.78} &                  45.34\std{2.83} &           46.53\std{0.43} &                    52.89\std{1.99} & 43.66\std{0.72} &           53.15\std{1.86}\\
DSBM        & 53.81\std{7.74}  &  66.43\std{14.39} &  58.99\std{7.62} &    70.75\std{14.03} & 50.09\std{4.81} &          61.71\std{13.90}\\
\midrule
OT-CFM   & 38.76\std{0.40} &             47.58\std{6.62} &           45.39\std{0.42} &            54.81\std{5.86} & 43.25\std{0.73} &           52.29\std{1.55} \\
I-CFM    & 41.83\std{3.28} &    49.78\std{4.43} &  48.28\std{3.28} &  57.26\std{3.86} & 44.12\std{0.52} &           52.99\std{1.50} \\
\bottomrule
\end{tabular}
}
\end{table}

\paragraph{{[SF]}$^2$M is a competitive generative model for low-dimensional data.}
We first evaluate in \autoref{tab:twodim} how well various methods approximate dynamic optimal transport on low-dimensional datasets (\ds{8gaussians}, \ds{moons}, and \ds{scurve}). We train Schr\"odinger bridges between a Gaussian and each dataset, and between \ds{8gaussians} and \ds{moons}, using [SF]$^2$M. We report the 2-Wasserstein distance between the predicted distribution and the target distribution with samples of size 10,000. Following \citet{tong_conditional_2023}, we also report the Normalized Path Energy relative to the 2-Wasserstein distance, defined as ${\rm NPE}(p,q) := |\int \|v_\theta(t, x)\|^2 dt - \mathcal{W}_2^2(p, q) | / \mathcal{W}_2^2(p, q)$. This metric is equal to zero if and only if $v_\theta$ solves the dynamic optimal transport problem. %
\autoref{tab:twodim} summarizes our results, showing that {[SF]}$^2$M outperforms all methods, both stochastic (top) and deterministic (bottom). Despite minibatch OT bias (which can be seen as a form of regularization like the entropic regularization \citep{fatras_learning_2020}), we find {[SF]}$^2$M-Exact approximates the Schr\"odinger bridge best, with the OT computation accounting for only 1\% of the training time on batch sizes of 512. 

\paragraph{{[SF]}$^2$M recovers the SB.}
Next we evaluate how well [SF]$^2$M can model Schr\"odinger bridge marginals. We use a Gaussian-to-Gaussian Schr\"odinger bridge because it has closed-form Gaussian marginals~\citep{mallasto_entropy_2021,bunne_recovering_2022} following \citet{de_bortoli_diffusion_2021}. After training all methods, we evaluate the quality of empirical marginals with respect to the ground truth by sampling trajectories using \autoref{alg:sf2m-inference}. We compute the KL divergence between a Gaussian approximation of the  empirical marginal and the Gaussian marginal of the ground truth Schr\"odinger bridge at multiple timepoints. This evaluation is shown in \autoref{tab:gaussians} at just the last timepoint ($t=1$) and an average over 21 equally spaced timepoints. We train each method for an equal number of steps, using 20 outer loops for DSB and DSBM, which require iterative optimization of forward and backward models. We find that in low dimensions SB-CFM, which corresponds to [SF]$^2$M's probability ODE flow, performs the best, closely followed by {[SF]}$^2$M-Exact. In high dimensions, {[SF]}$^2$M-Exact better matches the target distribution, and performs similarly to DSBM and significantly better than DSB on the intermediate marginals.

\begin{figure}[tb]
     \centering
     \includegraphics[width=\columnwidth]{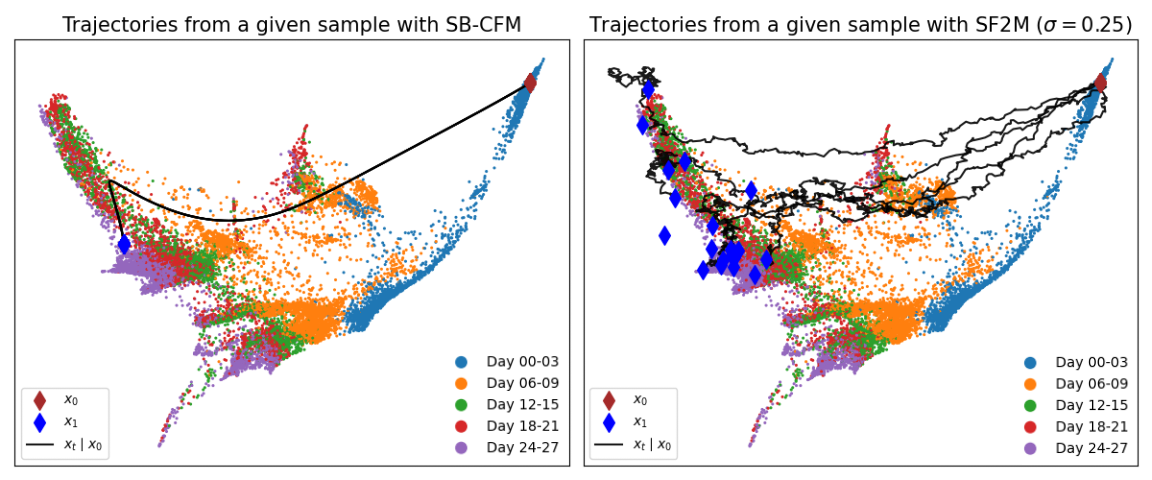}
      \vspace{-1.5em}
    \caption{Simulation of trajectories from a given cell on 2D EB data. \textbf{Left:} Probability flow ODE trajectory, approximated by SB-CFM \citep{tong_conditional_2023}. \textbf{Right:} Five SDE trajectories from [SF]$^2$M; more target samples (20) in blue.} 
    \label{fig:condi_single_cell}
\end{figure}

\paragraph{{[SF]}$^2$M accurately models high-dimensional single cell dynamics.}
We train our method {[SF]}$^2$M on single cell dynamics, as described in \autoref{sec:alg:single-cell}, on three real-world datasets in the setup established by \citet{tong_trajectorynet_2020} (see \autoref{supp:sec:exp_details:landscape}) and gathered our results for different dimensions in \autoref{tab:sc-dynamics-fived} and \autoref{tab:sc-dynamics}. Given $K$ unpaired data distributions representing a cell population at $K$ different timepoints, we solve a SB problem between every two successive time points, sharing parameters between the models. To test the interpolation ability of the trained models, we perform leave-one-out interpolation, predicting timepoint $k$ using a model trained on timepoints $[1, \ldots, k-1, k+1, \ldots, K]$.  
We consider four data representations of different dimensionality: using the first 5, 50, or 100 whitened principal components and using the 1000 dimensions corresponding to the most highly variable genes \citep{wolf_scanpy_2018}. [SF]$^2$M performs  the best among the Schr\"odinger bridge methods and similarly to the ODE-based OT-CFM on the low-dimensional data (\autoref{tab:sc-dynamics-fived}), but is better in higher dimensions (\autoref{tab:sc-dynamics}). As the number of dimensions grows, the advantage of geodesic interpolation ({[SF]}$^2$M-Geo) becomes apparent.

We also compare the stochastic process modeled by {[SF]}$^2$M to its probability flow ODE SB-CFM. We show in \autoref{fig:condi_single_cell} that [SF]$^2$M models a stochastic dependence of the output on the input, unlike SB-CFM, despite the two algorithms sharing marginal densities. This is important in the EB data as the initial stem cells are thought to be pluripotent and should evolve stochastically into differentiated cell types over time. Such differentiation cannot be modelled by ODE-based methods, thus motivating the use of a stochastic process for modeling single-cell dynamics.

\paragraph{{[SF]}$^2$M can be used to recover gene regulatory networks.}
We demonstrate the use-case of {[SF]}$^2$M for recovering gene regulatory networks (GRNs) from single-cell gene expression using the algorithm described at the end of \autoref{sec:alg:single-cell}. 
We show how we can use {[SF]}$^2$M to simultaneously learn dynamics and GRN structure from single-cell gene expression data. 
We use BoolODE \citep{pratapa2020benchmarking} to simulate two single-cell systems given ground truth GRNs: (1) a system with bifurcating trajectories (7 genes), and (2) a system with trifurcating trajectories (9 genes).
We summarize our results in \autoref{tab:boolode-grn}. We also measure how accurately the ground truth GRN is recovered using the standard AUC-ROC and average precision (AP) metrics. 
We find that {[SF]}$^2$M-Exact with no noise (corresponding to OT-CFM \citep{tong_conditional_2023}) performs best at inferring the underlying GRN as compared to correlation baselines (Pearson and Spearman correlation), a mutual information baseline DREMI~\citep{krishnaswamy_conditional_2014}, and pairwise Granger causality~\citep{granger_investigating_1969}. Higher values of $\sigma$ do not perform as well, but still outperform the baselines on most metrics. 

\begin{table}[t]
    \caption{GRN recovery from simulated time-lapsed single-cell gene expression. Shows structure predictive performance in terms of area under the receiver operator characteristic (AUC-ROC) and average precision (AP).}\vspace*{-1em}
    \label{tab:boolode-grn}
    \centering
    \resizebox{\linewidth}{!}{
\begin{tabular}{@{}lllll}
\toprule
GRN $\rightarrow$ & \multicolumn{2}{c}{Bifurcating System} & \multicolumn{2}{c}{Trifurcating System} \\
\cmidrule(lr){2-3}\cmidrule(lr){4-5}
    Alg.\ $\downarrow$ | Metric $\rightarrow$ &  AUC-ROC ($\uparrow$) & AP ($\uparrow$) &  AUC-ROC ($\uparrow$) & AP ($\uparrow$)\\
\midrule
    NGM-{[SF]}$^2$M$_{\sigma=0}$ & \textbf{0.786}\std{0.081} & \textbf{0.521}\std{0.160} &  \textbf{0.764}\std{0.066} & \textbf{0.485}\std{0.105} \\
    NGM-{[SF]}$^2$M$_{\sigma=0.1}$ & 0.723\std{0.014} & 0.444\std{0.030} & 0.731\std{0.077} & 0.453\std{0.091} \\
\midrule
    Spearman & 0.755\std{0.003} & 0.438\std{0.002} & 0.718\std{0.00} & 0.413\std{0.005} \\
    Pearson  & 0.744\std{0.000} & 0.415\std{0.000} & 0.710\std{0.00} & 0.405\std{0.002} \\
    DREMI    & 0.594\std{0.017} & 0.293\std{0.011} & 0.419\std{0.02} & 0.205\std{0.007} \\
    Granger  & 0.664\std{0.013} & 0.421\std{0.04 } & 0.613\std{0.04} & 0.343\std{0.039} \\
    SCODE    & 0.570\std{0.036} & 0.370\std{0.028} & 0.570\std{0.06} & 0.332\std{0.077} \\
    
\bottomrule
\end{tabular}
}
\end{table}

\section{CONCLUSION}
We have introduced a novel class of simulation-free objectives for learning continuous-time stochastic generative models between general source and target distributions. For sources and targets with finite support, we can directly approximate the continuous-time Schr\"odinger bridge without simulation by computing the entropic OT plan via efficient algorithms. We have shown how our method can be applied to learn cell dynamics and extract the gene regulatory structure. Future work can consider how to train {[SF]}$^2$M-like models with interventional data to improve GRN inference. 

\paragraph{Limitations.} The main limitation of {[SF]}$^2$M is that it requires knowledge of conditional path distributions (Brownian bridges). These distributions are not available in closed form if one considers more general reference processes~\citep{fernandes_shooting_2021}, which may be useful to encode biological priors~\citep{koshizuka_neural_2022}, or on general Riemannian manifolds. 

\section*{Acknowledgments}

We would like to thank Hananeh Aliee, Paul Bertin, Valentin de Bortoli, Stefano Massaroli, and Austin Stromme for productive conversations.
The authors acknowledge funding from CIFAR, Genentech, Samsung, IBM, Microsoft, and Google. We are also grateful to the anonymous reviewers for suggesting numerous improvements. This research was enabled in part by compute resources provided by Mila (mila.quebec) and NVIDIA Corporation. In addition, K.F.\ acknowledges funding from NSERC (RGPIN-2019-06512) and G.W.\ acknowledges funding from NSERC Discovery grant 03267 and NIH grant R01GM135929.

\bibliography{tidy}
\bibliographystyle{apalike}

\clearpage

\section*{Checklist}

 \begin{enumerate}

 \item For all models and algorithms presented, check if you include:
 \begin{enumerate}
   \item A clear description of the mathematical setting, assumptions, algorithm, and/or model. [Yes]
   \item An analysis of the properties and complexity (time, space, sample size) of any algorithm. [Yes]
   \item Source code, with specification of all dependencies, including external libraries. [Yes]
 \end{enumerate}

 \item For any theoretical claim, check if you include:
 \begin{enumerate}
   \item Statements of the full set of assumptions of all theoretical results. [Yes]
   \item Complete proofs of all theoretical results. [Yes]
   \item Clear explanations of any assumptions. [Yes]     
 \end{enumerate}

 \item For all figures and tables that present empirical results, check if you include:
 \begin{enumerate}
   \item The code, data, and instructions needed to reproduce the main experimental results (either in the supplemental material or as a URL). [Yes]
   \item All the training details (e.g., data splits, hyperparameters, how they were chosen). [Yes]
         \item A clear definition of the specific measure or statistics and error bars (e.g., with respect to the random seed after running experiments multiple times). [Yes]
         \item A description of the computing infrastructure used. (e.g., type of GPUs, internal cluster, or cloud provider). [Yes]
 \end{enumerate}

 \item If you are using existing assets (e.g., code, data, models) or curating/releasing new assets, check if you include:
 \begin{enumerate}
   \item Citations of the creator if your work uses existing assets. [Yes]
   \item The license information of the assets, if applicable. [Not Applicable]
   \item New assets either in the supplemental material or as a URL, if applicable. [Not Applicable]
   \item Information about consent from data providers/curators. [Not Applicable]
   \item Discussion of sensible content if applicable, e.g., personally identifiable information or offensive content. [Not Applicable]
 \end{enumerate}

 \item If you used crowdsourcing or conducted research with human subjects, check if you include:
 \begin{enumerate}
   \item The full text of instructions given to participants and screenshots. [Not Applicable]
   \item Descriptions of potential participant risks, with links to Institutional Review Board (IRB) approvals if applicable. [Not Applicable]
   \item The estimated hourly wage paid to participants and the total amount spent on participant compensation. [Not Applicable]
 \end{enumerate}

 \end{enumerate}
 
\clearpage
\appendix
\onecolumn
\aistatstitle{Simulation-Free Schr\"odinger Bridges via Score and Flow Matching: \\
Supplementary Materials}
\vspace{-6cm}

Our code can be found at \url{https://github.com/atong01/conditional-flow-matching}. The supplementary material is structured as follows:
\begin{itemize}
    \item Appendix \ref{app:ot} gives more background on optimal transport.
    \item Appendix \ref{sec:proofs} presents the proofs of our different results.
    \item Appendix \ref{sec:further_rw} gives more background on the related work.
    \item Appendix \ref{sec:additional_results} describes additional results and experiments.
    \item Appendix \ref{sec:nonconstant_sigma} discusses Schr\"odinger bridges with varying diffusion rate.
    \item Appendix \ref{supp:sec:exp_details} presents the experimental details of our experiments in the main paper.
\end{itemize}

\section{BACKGROUND ON OPTIMAL TRANSPORT}
\label{app:ot}

In this section, we review optimal transport and its application in machine learning.

\begin{algorithm}
\caption{Simulation-Free Score and Flow Matching Inference (with Euler-Maruyama integration)}
\label{alg:sf2m-inference}
\begin{algorithmic}
\State {\bfseries Input:} Source distribution $q_0$, flow and score networks $v_\theta$ and $s_\theta$, diffusion schedule $g(\cdot)$, integration step size $\Delta t$.
\State $x_0 \sim q_0(x)$
\For{$t$ in $[0, 1 / \Delta t)$}
\State $u_t \gets v_\theta(t, x_t) + \frac{g(t)^2}{2}s_\theta(t, x_t)$
\State $x_{t+\Delta t} \sim \mathcal{N}(x + u_t \Delta t, g(t)^2 \Delta t)$
\EndFor
\State \Return Samples $x_1$
\end{algorithmic}
\end{algorithm}

\begin{figure}[b!]
     \centering
     \includegraphics[width=\columnwidth]{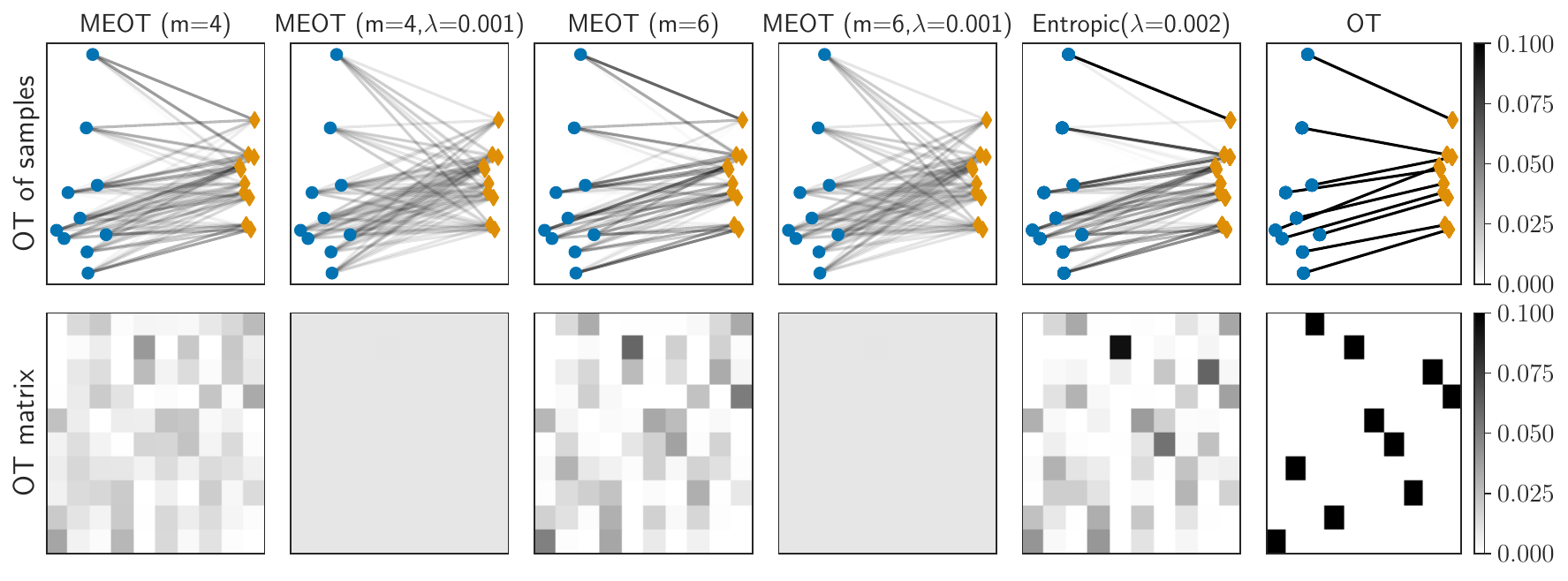}
      \vspace{-0.2cm}
    \caption{Optimal transport couplings for different OT costs and batch sizes on a 2D example. The top row represent the OT matching between samples while the bottom row represent the minibatch OT plan. We can see that coupling entropic OT with minibatches lead to a uniform plan contrary to using only entropic regularization or minibatch approximation.} 
    \label{fig:mbot_plan}
\end{figure}

\subsection{Minibatch OT}\label{supp:sec:mbot}

In the context of generative modeling, the source distribution is a Gaussian distribution and the target distribution is the real data distribution. This scenario corresponds to a semi-discrete optimal transport problem. Therefore, the Sinkhorn algorithm cannot be used to compute the entropic OT plan between distributions. It is nonetheless possible to compute it with stochastic algorithms \citep{GenevayStocEOT2016}. Unfortunately, these stochastic algorithms are slow to converge and it is prohibitive for large scale datasets. Therefore, we chose to rely on a minibatch optimal transport approximation \citep{fatras_learning_2020, fatras_minibatch_2021}. Minibatch OT computes the OT between minibatches of samples and thus corresponds to the discrete-discrete optimal transport setting. It is known to have quadratic computational and memory costs in the number of samples (see \autoref{supp:sec:complexity} for a longer discussion). 

The minibatch optimal transport approximation has been successfully before used in generative modeling \citep{genevay18a, salimans2018improving}. As it is different from the original optimal transport problem, we want to make sure that the basic properties from Schr\"odinger Bridges are conserved. We use the minibatch optimal transport plans definitions from \citet{fatras_learning_2020}, namely

\begin{definition}[minibatch transport plan \citep{fatras_learning_2020}]
Consider $\an$ and $\bn$ two empirical probability distributions. For each $A=\{a_1, \dots, a_m\} \in \mathcal{P}_m(\an)$ and $B=\{b_1, \dots, b_m\} \in \mathcal{P}_m(\bn)$ we denote by $\Pi_{ A, B }$ the optimal plan between the random variables, considered as a $n \times n$ matrix where all entries are zero except those indexed in $A \times B$. We define the \textit{averaged minibatch transport matrix}:
\begin{equation}
 \Pi^m(\an, \bn)  = \dbinom{n}{m}^{-2} \sum_{A \in \Pa} \sum_{B \in \Pb} \Pi_{  A, B }. \label{eq:pim}
\end{equation}
Following the subsampling idea, we define the subsampled minibatch transportation matrix:
\begin{equation}
 \Pi^k(\an, \bn) :=  k^{-1} \sum_{  (A, B)  \in D_k  }  \Pi_{  A, B }
\end{equation}
where $D_k$ is a set of cardinality $k$ whose elements are drawn at random from the uniform distribution on $ \Gamma:= \mathcal{P}_m( \{x_0^1, \cdots, x_0^n  \}) \times \mathcal{P}_m( \{x_1^1, \cdots, x_1^n \} )  $.
\label{def:mbot}
\end{definition}

Note that $\Pi_k$ converges exponentially fast to $\Pi_m$ as $k$ grows \cite[Theorem 2]{fatras_learning_2020}. In practice, it is enough to set $k$ equal to 1 to get good performance in deep learning applications \citep{genevay18a, Damodarandeepjdot2018, fatras_unbalanced_2021}. Therefore, the subsampling estimator does not have the correct marginals in general, contrary to $\Pi_m$ which has always the right marginals \cite[Proposition 1]{fatras_learning_2020}.  Minibatch optimal transport has been shown to implicitly regularize the transport plan \citep{fatras_learning_2020, fatras_minibatch_2021}. Indeed, as we draw uniformly at random sample to build the minibatches, we create non optimal connections. This is similar to the entropic optimal transport which densifies the transport plan. Therefore, coupling the minibatch approximation with the entropic OT cost might lead to an extremely dense plan that is close to the uniform transport plan. Unfortunately such a transport plan looses all geometric insights from data and is also far from the original entropic OT cost. We illustrate this phenomenon in \autoref{fig:mbot_plan} on a toy example between two 2D data distributions. Notably, the minibatch OT plan is closer to the entropic OT plan than the minibatch entropic OT plan. That is why in our experiments, we observed that the minibatch approximation with exact optimal transport outperforms the minibatch approximation with the entropic OT cost. We leave the question of closeness between minibatch OT and entropic OT as future work. While it would have been possible to decrease these non-optimal connections with OT variants \citep{fatras_unbalanced_2021}, it would have added extra hyperparameters that would have led to more compute. 

\subsection{Computational complexity}\label{supp:sec:complexity}
The batch size we use is 512 for all experiments except for the Gaussian-to-Gaussian experiments, which use batch size 500 to match the DSBM baseline. The complexity of static discrete OT is cubic in the batch size $m$ and linear in the dimension $d$: $O(m^3 + m^2d)$. This can be reduced to quadratic complexity in the entropy-regularized discrete OT case. However, for batches <10k, the exact minibatch OT is often actually faster than the Sinkhorn solver for the entropic OT problem thanks to the optimized simplex solver code in POT \citep{flamary_pot_2021}. Note that the memory complexity is quadratic in the number of samples due to the storage of the ground cost matrix.

The asymptotic complexity does not tell the full story, as these discrete algorithms run much faster than the (theoretically linear time) neural network training. In practice, for high dimensional settings we find a $<1\%$ overhead in including the OT solution per batch (much of which is in the transfer from GPU to CPU and back). We note that this is significantly faster than contemporary works such as DSBM and IBDM, which must train a bridge matching model for every iterative proportional fitting (IPF) step.

\section{PROOFS OF THEOREMS}\label{sec:proofs}
\printProofs

\section{FURTHER DISCUSSION ON RELATED WORK}
\label{sec:further_rw}

Next we discuss similarities and differences in related algorithms, including deterministic flow models and stochastic score-based generative models. 

\subsection{Flow models}

Recently, there has been significant advances in simulation-free training of flow models, which were originally trained using maximum likelihood training in what is termed as continuous normalizing flows~\citep{chen_neural_2018,grathwohl_ffjord:_2019}. In this section, we discuss the more recent flow matching techniques, which allow for simulation-free training of flow models. In this paper we show that any deterministic flow model can be converted into a stochastic model with the addition of a conditional score matching loss. This generalizes flow matching ideas to SDEs and provides a simple link between score-based generative modelling and flow-based generative modeling.

Conditional flow matching, terminology introduced by \citet{lipman_flow_2022}, and extended to dynamic optimal transport in \citet{tong_conditional_2023,pooladian_2023_multisample}, trains with the \textit{conditional flow matching} loss,
\begin{equation}
    \mathcal{L}_{\text{CFM}}(\theta) = \E_{t \sim \mathcal{U}(0,1), z\sim q(z), x \sim p_t(x | z)} \| v_\theta(t, x) - \nothat{u}_t(x | z)\|^2.
\end{equation}
for some predefined conditional paths $z, q(z), u_t(x | z)$, and $p_t(x | z)$. Depending on the choice of conditioning and probability paths we can recover most known flow matching techniques, such as the originally described flow matching~\citep{lipman_flow_2022}, action matching~\citep{neklyudov_action_2022}, stochastic interpolants~\citep{albergo_building_2023,albergo_stochastic_2023}, the 1-rectified flow~\citep{liu_rectified_2022}, optimal transport conditional flow matching (OT-CFM)~\citep{tong_conditional_2023} also known as multisample flow matching~\citep{pooladian_2023_multisample}, and its deterministic Schr\"odinger bridge counterpart, Schr\"odinger bridge conditional flow matching (SB-CFM)~\citep{tong_conditional_2023}.

\paragraph{SDEs vs.\ ODEs.} Flow models are a powerful way to learn deterministic dynamics which are often faster to sample from~\citet{song_denoising_2021}, particularly with ideas from dynamic optimal transport~\citep{tong_conditional_2023,pooladian_2023_multisample}. However, recent work \citep{liu_sb_2023,shi_diffusion_2023} has noted advantages of stochastic dynamics which we also observe (see \autoref{tab:twodim} and \autoref{tab:gaussians}), particularly in high dimensional settings and in terms of generative performance. In this work, we also seek to model an inherently stochastic system, where there is randomness introduced based on a variety of external factors. In particular, the fact that a single-cell develops into a whole organism, and similarly, the fact that a single population of stem cells develops into a multitude of different cell types, requires the modelling with stochastic dynamics that can model complex conditional distributions. While it is possible to approximate the conditional distribution with lineage-tracing techniques~\citep{kester_single-cell_2018,wagner_lineage_2020,klein_mapping_2023}, these are biologically complex, and we leave their analysis to future work.

\subsection{Learning Schr\"odinger bridges}

Schr\"odinger bridge models have been used as the source-conditioned variation of score-based generative models, which can be conditioned on a variety of source distributions including dirac~\citep{wang_deep_2021}, and from data or noised data~\citep{somnath_aligned_2023,liu_sb_2023}. Most algorithms are based on mean-matching, which is simulation-based and requires simulating, storing, and matching whole trajectories~\citep{de_bortoli_diffusion_2021}, with similar algorithms introduced by \citet{vargas_solving_2021,chen_likelihood_2022}. In particular, these algorithms parameterize a forward drift $v_\theta^f$ and backwards drift $v_\theta^b$, simulating trajectories in one direction and training the reverse direction to match them. Recent work extends this to Markovian bridges (termed bridge matching)~\citep{shi_diffusion_2023,peluchetti_diffusion_2023}, which greatly improves performance.

\subsection{Practical implications of Schr\"odinger bridge modeling choices}\label{supp:sec:practical_sb}

\paragraph{Approximation error.}
All known algorithms for learning Schr\"odinger bridges for general distributions can only approximate the Schr\"odinger bridge. Approximation error accumulates from a number of sources, which can roughly be categorized into the following areas:
\begin{enumerate}[left=0pt,nosep]
    \item Error in the static joint optimization $\hat{\pi}(x_0, x_1)$ approximating the true joint $\pi^\star(x_0, x_1)$. Since there is no closed form for this optimization, all approximations are iterative and discrete. Error in this distribution accumulates in two places, and can be divided into error on the marginals $|\hat{\pi}(x_0, \cdot) - q(x_0)|$, $|\hat{\pi}(\cdot, x_1) - q(x_1)|$, and error on the joint $|\hat{\pi}(x_0, x_1) - \pi^\star(x_0, x_1)|$:
        \begin{enumerate}
            \item Marginal and Joint error from using a finite number of iterations (IPF underfitting). IPF both in continuous space (as used by mean-matching and bridge matching) and IPF in discrete space (as used by [SF]$^2$M) are run for a finite number of iterations. Continuous space IPF is much more expensive (involving fitting and simulating neural networks) and is thus is used with tens of iterations, often $L=20$ \citep{de_bortoli_diffusion_2021, shi_conditional_2022}. In contrast, discrete space IPF uses many more iterations. We use the default python optimal transport (POT) parameters with $L_{max} = 1000$ with early stopping if the marginal error is $< 10^{-9}$~\citep{flamary_pot_2021}. The discrete OT computation adds a negligible ($<1$\%) computational overhead in [SF]$^2$M.
            \item Joint error from minibatch approximation (discrete IPF only). By using discrete OT solvers, we accumulate discretization error between by using finite batch sizes.  This affects the recovery of the true Schr\"odinger bridge, but does not affect the marginal error, which is the important error for generative modeling performance at the endpoints.  As batch size $\to \infty$, this error goes to zero; for a finite dataset of small enough size, [SF]$^2$M can use full-batch discrete entropic OT, where the batch size used for OT can be larger than that used for neural network training.
        \end{enumerate}
        In practice, [SF]$^2$M, by using discrete OT solvers, trades off increased error in the joint via minibatch approximation error for reduced error from IPF underfitting. It is difficult to bound these errors theoretically, however, our experiments show that this tradeoff is useful in practice. We leave further theoretical investigation to future work.
    \item Finite sample error. Datasets are often finite (\ie, discrete). Approximating the continuous densities from discrete samples is challenging and accumulates error. 
    \item Neural network fitting approximation (applicable to all neural SDE approximations to Schr\"odinger bridges).
    \item Discretization error. %
    Simulation-free objectives like [SF]$^2$M treat time as a continuous variable, but, at inference time, error accumulates from integration in discrete time (\eg, using an Euler-Maruyama scheme). We note that simulation-based objectives, such as that of DSB \citep{de_bortoli_diffusion_2021}, also \emph{train} in a fixed time discretization, inducing a further approximation error.
\end{enumerate}

\paragraph{Iterative fitting.}
Both mean-matching and bridge matching Schr\"odinger bridge methods require an outer IPF step, which samples data to train for each inner neural network optimization step (\autoref{tab:algs_comparison}). At each step, new data is resampled to determine the boundary distributions of the next iteration (\autoref{alg:sf2m-looped-train}). This multiplies the training time by $L$ (the number of outer loops), and cannot guarantee that that the neural networks match the right endpoint marginals until the outer IPF step has converged. 

In contrast, [SF]$^2$M is always optimizing for the correct marginals, but at the cost of a biased Schr\"odinger bridge due to bias in the minibatch OT. We hypothesize that this is one of the reasons that [SF]$^2$M has better generative modeling performance than mean-matching or bridge-matching methods.

In summary, we recommend using DSBM / IDBM when approximating the true Schr\"odinger bridge is important and the computational budget is large. We recommend using [SF]$^2$M when the computational budget is small or when the generative performance is more important than matching the true Schr\"odinger bridge process.

\section{ADDITIONAL RESULTS AND ABLATIONS}
\label{sec:additional_results}

\begin{figure}[htb]
     \centering
     \includegraphics[width=\columnwidth]{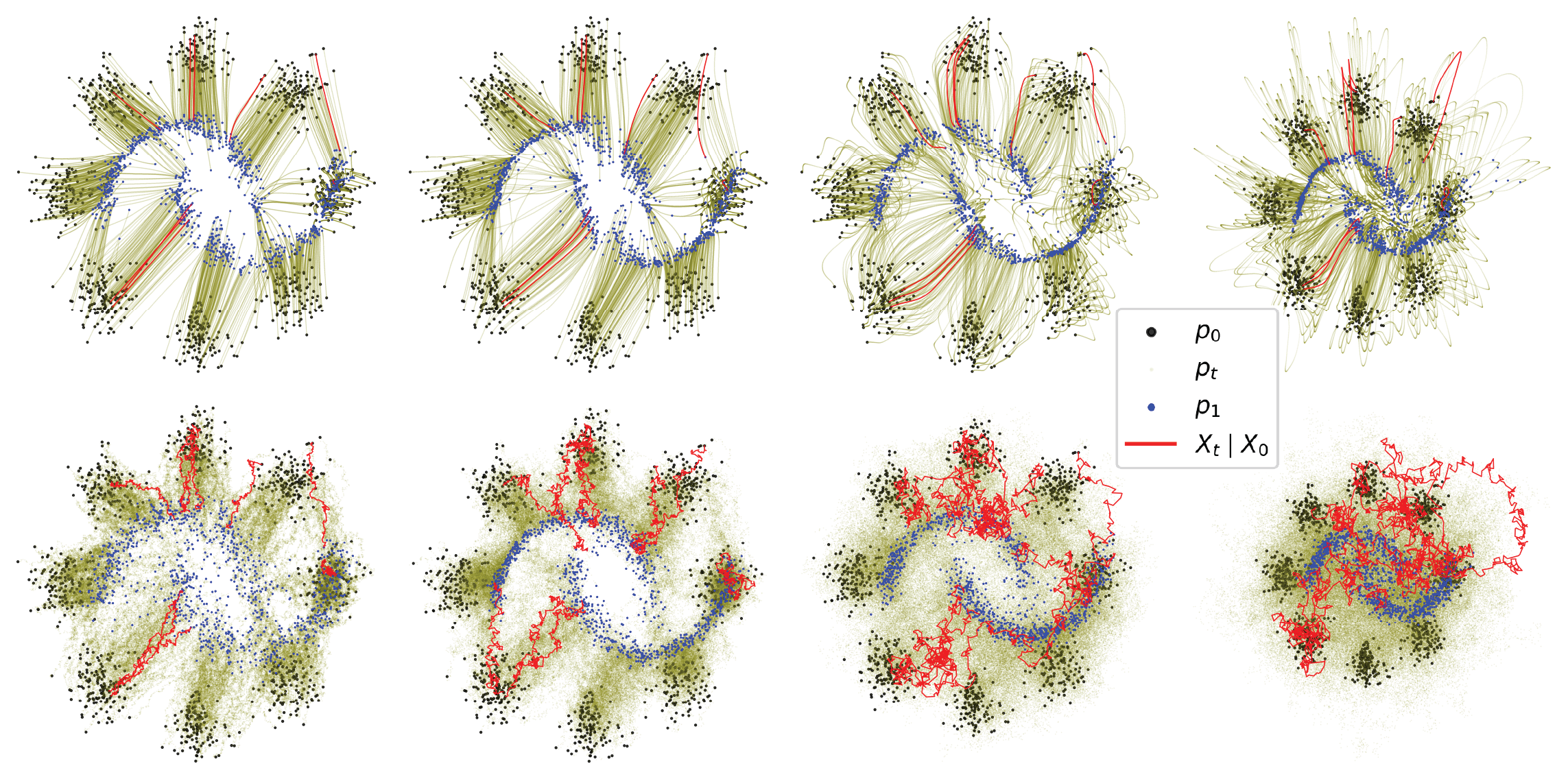}
      \vspace{-0.2cm}
    \caption{Learned ODE (top) and SDE (bottom) simulations for $\sigma \in [0.1, 1, 2, 3]$ from left to right for trained [SF]$^2$M model. The marginals match regardless of the chosen $\sigma$. Trajectory initializations are matched between runs.} 
    \label{fig:sigma-exploration}
\end{figure}
\begin{figure}[t]
     \centering
     \includegraphics[width=\columnwidth]{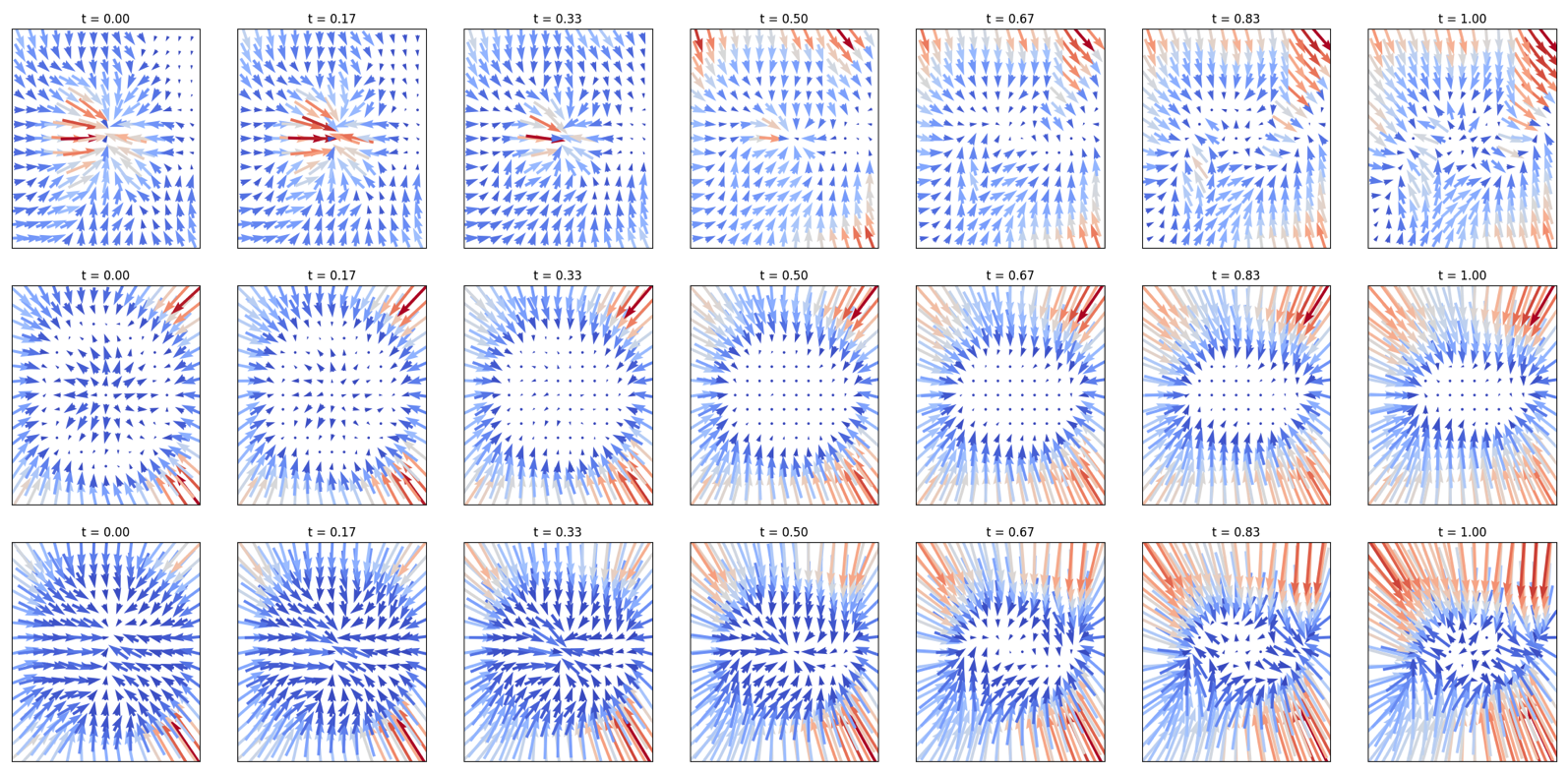}
    \caption{From top to bottom $v_\theta(t, x)$, $s_\theta(t, x)$ and $u_t(x)$ inferred through (\ref{eq:sde_from_sf}) for the 8-Gaussians to moons dataset.}
    \label{fig:8gaussians-moons:vectors}
\end{figure}

\begin{table}[]
    \centering

    \caption{Gaussian-to-Gaussian Schr\"odinger bridges with 10,000 datapoints. Here we text Sinkhorn-Exact which uses the exact OT (default), against the Sinkhorn algorithm for the static OT within batches, and using outer loops where we simulate 10,000 trajectories for further training 20 times following \citet{shi_diffusion_2023}.}
\begin{tabular}{lrlll}
\toprule
     & Dim &                        [SF]$^2$M-Sinkhorn &                  [SF]$^2$M-Exact &           [SF]$^2$M-Exact-Looped \\
\midrule
$\mathrm{KL}(p_1, q_1)$ & 2   &  \textbf{0.002 $\pm$ 0.000} &           0.003 $\pm$ 0.000 &           0.032 $\pm$ 0.009 \\
     & 5   &  \textbf{0.004 $\pm$ 0.001} &           0.007 $\pm$ 0.000 &           0.088 $\pm$ 0.011 \\
     & 20  &  \textbf{0.029 $\pm$ 0.001} &           0.029 $\pm$ 0.002 &           0.293 $\pm$ 0.021 \\
     & 50  &  \textbf{0.122 $\pm$ 0.003} &           0.124 $\pm$ 0.003 &           0.610 $\pm$ 0.033 \\
     & 100 &           0.493 $\pm$ 0.010 &  \textbf{0.486 $\pm$ 0.005} &           1.578 $\pm$ 0.026 \\\midrule
Mean $\mathrm{KL}(p_t, q_t)$ & 2   &  \textbf{0.001 $\pm$ 0.000} &           0.004 $\pm$ 0.000 &           0.006 $\pm$ 0.002 \\
     & 5   &  \textbf{0.004 $\pm$ 0.000} &           0.006 $\pm$ 0.000 &           0.019 $\pm$ 0.003 \\
     & 20  &           0.042 $\pm$ 0.001 &  \textbf{0.028 $\pm$ 0.001} &           0.080 $\pm$ 0.006 \\
     & 50  &           0.276 $\pm$ 0.001 &           0.258 $\pm$ 0.001 &  \textbf{0.243 $\pm$ 0.012} \\
     & 100 &           1.000 $\pm$ 0.007 &           0.977 $\pm$ 0.003 &  \textbf{0.792 $\pm$ 0.008} \\
\bottomrule
\end{tabular}

    \label{supp:tab:gaussian-extended}
\end{table}

\begin{algorithm}
\caption{Looped Minibatch Simulation-Free Score and Flow Matching Training}
\label{alg:sf2m-looped-train}
\begin{algorithmic}
\State {\bfseries Input:} Samplable source and target $q_0, q_1$, number of outer loop iterations $L$, inner loop iterations $I$, cache size $n$, noise term $\sigma$, weighting schedule $\lambda(t)$, initial networks $v_{\theta}$ and $s_\theta$.
\For{Outer loop $l \in [1, \ldots, L]$}
\For{Inner loop $i \in [1, \ldots, I]$}
\If{$l = 0$}
\State $x_0, x_1 \sim q_0^{\otimes m}, q_1^{\otimes m}$
\State $\pi \gets \mathrm{Sinkhorn}(x_0, x_1, 2 \sigma^2)$ \Comment{Or $\mathrm{OT}(x_0, x_1)$ see \autoref{supp:sec:mbot}}
\State $x_0, x_1 \sim \pi^{\otimes m}$ \Comment{Resample OT pairs from $\pi$}
\Else{}
\State $x_0, x_1 \sim \mathbb{T}^{\otimes m}$
\EndIf
\State $t \sim \mathcal{U}(0, 1)$
\State $p_t(x | x_0, x_1) \gets \mathcal{N}(x; t x_1 + (1 - t) x_0, \sigma^2 t (1-t))$
\State $x \sim p_t( x | x_0, x_1)$
\State $\mathcal{L}_{\rm [SF]^2M} \gets \| v_\theta(t, x) - \nothat u_t(x | x_0, x_1)\|^2 + \lambda(t)^2 \|s_\theta(t, x) - \nabla_{x}\log p_t(x|x_0, x_1)||^2$
\State $\theta \gets \mathrm{Update}(\theta, \nabla_\theta \mathcal{L}_{\rm [SF]^2M})$
\EndFor
\State $\mathbb{T}_f \gets (x_0, \hat{x}_1)^{\otimes n // 2}$
\Comment{where $\hat{x}_1$ is sampled according to \autoref{alg:sf2m-inference}}
\State $\mathbb{T}_b \gets (\hat{x}_0, x_1)^{\otimes n // 2}$
\Comment{where $\hat{x}_0$ is sampled according to the backwards analog of \autoref{alg:sf2m-inference}}
\State $\mathbb{T} \gets [\mathbb{T}_f, \mathbb{T}_b]$
\EndFor
\State \Return $v_\theta,s_\theta$
\end{algorithmic}
\end{algorithm}

\subsection[Looped SF2M]{Looped {[SF]}$^2$M}

As previously discussed, the majority of Schr\"odinger bridge algorithms to date have used outer iterations to perform iterative proportional fitting on the continuous distributions. This can create marginals closer to the true Schr\"odinger bridge marginals at the cost of additional computation, and potentially worse generative modelling performance. [SF]$^2$M is the first Schr\"odinger bridge method to approximate Schr\"odinger bridges without performing the iterative proportional fitting in continuous time and space, instead using much more efficient iterations in the static, discrete OT setting. 

However, [SF]$^2$M is compatible with outer looping. In \autoref{supp:tab:gaussian-extended} we see that outer looping (with 20 outer loops following~\citet{shi_diffusion_2023}) produces better Schr\"odinger bridge marginals, but worse marginals at time 1 indicating worse generative performance. We note that outer looping takes much longer to train as [SF]$^2$M effectively takes a single outer loop, and may have advantages even over a single outer loop as shown in \citet{tong_conditional_2023,pooladian_2023_multisample} where the static solution sped up training in the deterministic setting.

Here we accomplish outer looping by simulating 5,000 (stochastic) trajectories from the backwards SDE and 5,000 trajectories from the forwards SDE. We then use the start and end points of these trajectories as samples from the approximate static OT matrix in the next iteration. This algorithm is detailed in \autoref{alg:sf2m-looped-train}. We always set $n$ (the number of samples per outer loop) to the size of the empirical dataset, fix the number of outer loops to 20, and the number inner loops to the [SF]$^2$M without outer loops divided by 20. This gives the same number of gradient descent steps for all methods, but we note that the outer loop methods require simulation for each outer loop. This adds additional computation cost.

\subsection{On the choice of static OT method: Sinkhorn vs.\ Exact}

As the minibatch size gets large, [SF]$^2$M with entropic OT with $\epsilon = 2 \sigma^2$ is the correct choice. However, with minibatching, we want a smaller $\epsilon$. In practice we often use $\epsilon = 0$ corresponding to exact (unregularized) optimal transport. We test this on the Gaussian-to-Gaussian experiment in \autoref{supp:tab:gaussian-extended}. Here we see that [SF]$^2$M with sinkhorn OT works better in low dimensions, but struggles in high dimensions. We believe this is because minibatch-OT effectively adds more entropy in higher dimensions. More experimentation and theory is needed to determine the optimal setting of $\epsilon$ for a given dataset and minibatch size. 

\begin{table}[t!]
    \centering
    \begin{tabular}{@{\hskip3pt}c@{\hskip3pt}@{\hskip3pt}c@{\hskip3pt}@{\hskip3pt}c@{\hskip3pt}@{\hskip3pt}c@{\hskip3pt}@{\hskip3pt}c@{\hskip3pt}@{\hskip3pt}c@{\hskip3pt}@{\hskip3pt}c@{\hskip3pt}}%
\toprule
   \normalsize [SF]$^2$M & I-CFM &  OT-CFM &   DSBM &      FM &     RF \\
\midrule
  4.145 &  4.381 & 4.456 &      4.511 &   4.611 &   6.01 \\
\bottomrule
\end{tabular}
    \caption{CIFAR-10 \textbf{FID} using 100-step Euler integration. First 4 models trained with batch size 128 for 16 A100-hours. Last 2 models from the DSBM paper, trained for $\sim$200 A100-hours. [SF]$^2$M performs better than DSBM (with 1/12 the compute) and the deterministic methods (I-CFM, OT-CFM, FM). Better performance can be obtained by considering higher-order or adaptive step solvers.}
    \label{tab:cifar}
\end{table}
\subsection{Extra experiments on Cifar10}
We have conducted experiments on Cifar 10 using the code from \cite{tong_conditional_2023} and the experimental evaluation from \cite{shi_diffusion_2023}. Results are gathered in Table \ref{tab:cifar}.

\begin{table}[thb]
    \caption{GRN recovery and leave-last-timepoint-out testing using single-cell gene expression over two simulated datasets. We measure performance of predicting the distribution of the final left-out timepoints (2-Wasserstein and radial basis function maximum mean discrepancy) as well accuracy of structure recovery (AUC-ROC and AP).}
    \label{supp:tab:boolode-grn}
    \centering
    \resizebox{\linewidth}{!}{
\begin{tabular}{@{}lcccccccc}
\toprule
& \multicolumn{4}{c}{Bifurcating System} & \multicolumn{4}{c}{Trifurcating System} \\
\cmidrule(lr){2-5}\cmidrule(lr){6-9}
     & $\mathcal{W}_2$ ($\downarrow$) & RBF-MMD ($\downarrow$) & AUC-ROC ($\uparrow$) & AP ($\uparrow$) & $\mathcal{W}_2$ ($\downarrow$) & RBF-MMD ($\downarrow$) & AUC-ROC ($\uparrow$) & AP ($\uparrow$)\\
\midrule
    OT-CFM & \textbf{0.782} $\pm$ \textbf{0.105} & \textbf{0.054} $\pm$ \textbf{0.004} & --- & --- & 0.921 $\pm$ 0.142 & 0.068 $\pm$ 0.006 & --- & --- \\
    {[SF]}$^2$M & 0.791 $\pm$ 0.097 & 0.056 $\pm$ 0.005 & --- & --- & 0.932 $\pm$ 0.159 & \textbf{0.062} $\pm$ \textbf{0.006} & --- & --- \\
    NGM-{[SF]}$^2$M$_{\sigma=0}$ & 0.783 $\pm$ 0.112 & 0.055 $\pm$ 0.006 & \textbf{0.786} $\pm$ \textbf{0.081} & \textbf{0.521} $\pm$ \textbf{0.160} & \textbf{0.912} $\pm$ \textbf{0.161} & 0.064 $\pm$ 0.005 & \textbf{0.764} $\pm$ \textbf{0.066} & \textbf{0.485} $\pm$ \textbf{0.105} \\
    NGM-{[SF]}$^2$M$_{\sigma=0.1}$ & 0.835 $\pm$ 0.089 & 0.064 $\pm$ 0.011 & 0.723 $\pm$ 0.014 & 0.444 $\pm$ 0.030 & 1.049 $\pm$ 0.195 & 0.080 $\pm$ 0.014 & 0.731 $\pm$ 0.077 & 0.453 $\pm$ 0.091 \\
    NGM-{[SF]}$^2$M$_{\sigma=0.01}$ & 0.813 $\pm$ 0.101 & 0.064 $\pm$ 0.008 & 0.715 $\pm$ 0.047 & 0.442 $\pm$ 0.033 & 0.956 $\pm$ 0.121 & 0.069 $\pm$ 0.005 & 0.726 $\pm$ 0.081 & 0.451 $\pm$ 0.094 \\
    NGM-{[SF]}$^2$M$_{\sigma=0.001}$ & 0.844 $\pm$ 0.063 & 0.082 $\pm$ 0.018 & 0.699 $\pm$ 0.043 & 0.418 $\pm$ 0.060 & 1.005 $\pm$ 0.150 & 0.094 $\pm$ 0.014 & 0.725 $\pm$ 0.080 & 0.445 $\pm$ 0.082 \\
\bottomrule
    Spearman & --- & --- & 0.755 $\pm$ 0.003 & 0.438 $\pm$ 0.002 & --- & --- & 0.718 $\pm$ 0.005 & 0.413 $\pm$ 0.005 \\
    Pearson & --- & --- & 0.744 $\pm$ 0.000 & 0.415 $\pm$ 0.000 & --- & --- & 0.710 $\pm$ 0.002 & 0.405 $\pm$ 0.002 \\
    DREMI~\citep{krishnaswamy_conditional_2014} & --- & --- & 0.594 $\pm$ 0.017 & 0.293 $\pm$ 0.011 & --- & --- & 0.419 $\pm$ 0.021 & 0.205 $\pm$ 0.007 \\
    Granger~\citep{granger_investigating_1969} & --- & --- & 0.664 $\pm$ 0.013 & 0.421 $\pm$ 0.043 & --- & --- & 0.613 $\pm$ 0.048 & 0.343 $\pm$ 0.039 \\
\bottomrule
\end{tabular}
}
\end{table}

\section{SCHR\"ODINGER BRIDGES WITH VARYING DIFFUSION RATE}\label{sec:nonconstant_sigma}
While we consider constant diffusion for the majority of this paper, the theory also applies for time varying diffusion with the variation as specified in the following Lemma.
\begin{lemma}[Brownian bridge with time-varying diffusion]
Suppose $\rvx_t$ is a stochastic process with values in functions $[0,1]\to\R^d$, defined by initial conditions $\rvx_0=\rva$ and SDE $d\rvx_t=\sigma(t)\,d\rvw_t$, where $\sigma(t)$ is continuous and positive on $(0,1)$. Define $F(t)=\int_0^t\sigma^2(s)\,ds$. Then $\rvx_t$ satisfies
\begin{align*}
\rvx_t&\sim\gN(\rva,F(t))\\
\rvx_t\mid\rvx_1=\rvb&\sim\gN\left(\rva+(\rvb-\rva)\frac{F(t)}{F(1)},F(t)-\frac{F(t)^2}{F(1)}\right).
\end{align*}
\end{lemma}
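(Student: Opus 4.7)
The plan is to establish the result by a direct application of two standard facts: Itô integrals of deterministic integrands produce Gaussian processes, and the conditional law of a jointly Gaussian pair admits an explicit formula. A structurally cleaner but equivalent route is a time change reducing to a standard Brownian bridge; I would mention it as an alternative.

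First I would handle the unconditional marginal. Writing $\rvx_t - \rva = \int_0^t \sigma(s)\,d\rvw_s$, this is an Itô integral against $d$-dimensional Brownian motion with a scalar, deterministic, square-integrable integrand on $[0,t]$ (square-integrability on any proper subinterval follows from continuity of $\sigma$ on $(0,1)$ and can be extended to $[0,1]$ by monotone limits since $F(1)<\infty$ is assumed implicitly). Such an integral is a centered $d$-dimensional Gaussian vector, with covariance $\int_0^t \sigma^2(s)\,ds\cdot I_d = F(t) I_d$ by Itô's isometry applied componentwise. This yields $\rvx_t \sim \gN(\rva, F(t))$.

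Next I would compute the joint law of $(\rvx_t,\rvx_1)$ for $t\in(0,1)$. Decomposing $\rvx_1 - \rva = (\rvx_t - \rva) + \int_t^1 \sigma(s)\,d\rvw_s$ exhibits this as a linear function of two independent Gaussians (independence from the disjoint-interval structure of Brownian increments), so the pair is jointly Gaussian; its cross-covariance is $\Cov(\rvx_t,\rvx_1) = \Var(\rvx_t) = F(t) I_d$. Applying the standard Gaussian conditioning formula gives conditional mean $\rva + \tfrac{F(t)}{F(1)}(\rvb - \rva)$ and conditional covariance $\bigl(F(t) - \tfrac{F(t)^2}{F(1)}\bigr) I_d$, which is exactly the claimed expression.

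The main obstacle is essentially nothing more than bookkeeping; the only mildly delicate point is justifying joint Gaussianity in a way that handles the $d$-dimensional vector-valued case cleanly. If I wanted to sidestep direct computation entirely, I would instead note that $\rvy_u := \rva + \rvx_{F^{-1}(u)} - \rva$ is, by Lévy's characterization applied to each coordinate, a standard $d$-dimensional Brownian motion on $[0,F(1)]$; pulling back the classical Brownian bridge formulas on $[0,F(1)]$ via $u = F(t)$ recovers both claims simultaneously. I would likely present the direct computation as primary and mention the time-change argument as a remark.
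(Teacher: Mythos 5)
Your proof is correct, but it takes a genuinely different primary route from the paper's. The paper proves the lemma entirely by a deterministic time change: it sets $\rvy_t=(\rvx_{F^{-1}(F(1)t)}-\rva)/\sqrt{F(1)}$, checks that $\rvy$ is standard Brownian motion on $[0,1]$, quotes the classical Brownian bridge formulas for $\rvy$, and pulls them back through $u=F(t)$. You instead compute directly: Gaussianity and variance $F(t)I_d$ of $\rvx_t-\rva$ from the It\^o isometry, joint Gaussianity of $(\rvx_t,\rvx_1)$ from the independent-increment decomposition $\rvx_1-\rva=(\rvx_t-\rva)+\int_t^1\sigma(s)\,d\rvw_s$ with cross-covariance $F(t)I_d$, and then the standard Gaussian conditioning formula, which indeed yields mean $\rva+\frac{F(t)}{F(1)}(\rvb-\rva)$ and covariance $\bigl(F(t)-\frac{F(t)^2}{F(1)}\bigr)I_d$. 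Both arguments are sound; yours is more self-contained (no appeal to pre-packaged bridge formulas) and handles the vector-valued case transparently, while the paper's time change buys something your marginal computation does not: it identifies the \emph{entire conditioned process law} as a reparametrized Brownian bridge, which is exactly what the subsequent corollary needs when it asserts that Markovization commutes with time reparametrization. Since you mention the time-change route as a remark anyway, note one slip there: as written, $\rvy_u:=\rva+\rvx_{F^{-1}(u)}-\rva$ collapses to $\rvx_{F^{-1}(u)}$, which starts at $\rva$ rather than $0$; you want $\rvy_u:=\rvx_{F^{-1}(u)}-\rva$ before invoking L\'evy's characterization. Also, your parenthetical about extending square-integrability to $[0,1]$ is the right instinct --- finiteness of $F(1)$ is implicit in the statement since $F(1)$ appears in the conclusion, and the paper likewise glosses over this.
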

\begin{proof}
The constraints on $\sigma$ guarantee that $F$ has a unique inverse $F^{-1}$ on $[0,F(1)]$. Consider the process 
\[\rvy_t=\frac{\rvx_{F^{-1}(F(1)t)}-\rva}{\sqrt{F(1)}},\] which is equivalently characterized by \[\rvx_t=\rva+\sqrt{F(1)}\rvy_{F(t)/F(1)}.\]
A straightforward computation using the chain rule shows that $\rvy_t$ satisfies $\rvy_0=\boldsymbol{0}$ and $d\rvy_t=d\rvw_t$, \ie, $\rvy_t$ is Brownian motion. 

By standard facts about Brownian bridges, we have
\begin{align*}
\rvy_t&\sim\gN(0,t)\\
\rvy_t\mid\rvy_1=\frac{\rvb-\rva}{\sqrt{F(1)}}&\sim\gN\left(\frac{\rvb-\rva}{\sqrt{F(1)}} t,t(1-t)\right).
\end{align*}
The result follows by applying the reverse transformation to obtain the marginals of $\rvx_t$.
\end{proof}

Because Markovization commutes with time reparametrization, and the SB is the Markovization of a mixture of Brownian bridges, this immediately implies:
\begin{corollary}
The solution $p$ to the SB problem with reference process $d\rvx_t=\sigma(t)\,d\rvw_t$ has marginal $p(x_0,x_1)=\pi_{2F(1)}(x_0,x_1)$.
\end{corollary}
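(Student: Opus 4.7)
The plan is to reduce the time-varying diffusion case to the constant diffusion case already covered by \autoref{prop:sb_is_mixture_of_bb} (Föllmer) via the time reparametrization furnished by the preceding lemma. Observe that the SB problem is defined as a KL minimization over path measures with fixed marginals $q_0,q_1$; its endpoint joint is therefore unaffected by any bijective reparametrization of the time interval, since such a reparametrization is a measurable bijection on path space that fixes $t=0$ and $t=1$ and preserves the admissibility constraint as well as the KL divergence (it is merely a relabelling of coordinates).

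First I would rewrite the reference process $d\rvx_t=\sigma(t)\,d\rvw_t$ via the change of variables $s=F(t)/F(1)$, exactly as in the proof of the preceding lemma. This turns $\rvx_t$ into $\rva+\sqrt{F(1)}\,\rvy_s$, where $\rvy_s$ is a standard Brownian motion on $[0,1]$. Equivalently, in the new time variable $s$ the reference law is $\sqrt{F(1)}\,\sW$, i.e.\ a Brownian motion with \emph{constant} diffusion coefficient $\sqrt{F(1)}$.

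Next I would invoke \autoref{prop:sb_is_mixture_of_bb}: the SB problem between $q_0$ and $q_1$ with reference $\sqrt{F(1)}\,\sW$ admits a unique solution whose endpoint joint distribution is the entropic OT plan $\pi^\star_{2F(1)}(q_0,q_1)$. Since the time reparametrization $s\mapsto F^{-1}(F(1)s)$ is a bijection of $[0,1]$ fixing the endpoints, and pushes path measures bijectively while preserving both the endpoint marginals and the KL divergence, it carries the unique minimizer of the reparametrized SB to the unique minimizer of the original SB, with the same endpoint marginal. Hence $p(x_0,x_1)=\pi^\star_{2F(1)}(x_0,x_1)$, as claimed.

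The only real subtlety — and thus the main obstacle — is justifying that time reparametrization preserves both the admissibility constraint ($p_0=q_0,p_1=q_1$) and the KL functional so that the minimizer transports cleanly. Admissibility is immediate because the reparametrization fixes $t=0$ and $t=1$. KL invariance follows from the fact that the reparametrization is a deterministic measurable bijection on path space applied identically to both $\mathbb{P}$ and $\mathbb{Q}$, so the Radon-Nikodym derivative $d\mathbb{P}/d\mathbb{Q}$ transforms to the corresponding derivative for the pushed-forward measures and the relative entropy is unchanged. Once this invariance is in hand, the corollary follows by directly transferring the Föllmer characterization from the constant-rate case to the time-varying case.
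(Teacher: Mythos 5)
Your proof is correct and takes essentially the same route as the paper: reduce to the constant-diffusion case via the time change $s=F(t)/F(1)$, under which the reference process becomes $\sqrt{F(1)}\,\sW$, and then apply the F\"ollmer characterization to get the endpoint coupling $\pi^\star_{2F(1)}$. The paper compresses this into the single remark that Markovization commutes with time reparametrization; your version, which argues invariance of the KL-minimization problem under the endpoint-fixing path-space bijection, is the same reduction spelled out more explicitly.
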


\section{EXPERIMENTAL DETAILS}\label{supp:sec:exp_details}

\begin{algorithm}
\caption{Minibatch Optimal Transport Simulation-Free Score and Flow Matching Training}
\label{alg:sf2m-detailed-train}
\begin{algorithmic}
\State {\bfseries Input:} Samplable source and target $q_0, q_1$, noise term $\sigma$, weighting schedule $\lambda(t)$, initial networks $v_{\theta}$ and $s_\theta$.
\While{Training}

\State $t \sim \mathcal{U}(0, 1)$
\State $x_0, x_1 \sim q_0^{\otimes m}, q_1^{\otimes m}$
\State $\pi \gets \mathrm{Sinkhorn}(x_0, x_1, 2 \sigma^2)$ \Comment{Or $\mathrm{OT}(x_0, x_1)$ see \autoref{supp:sec:mbot}}
\State $x_0, x_1 \sim \pi^{\otimes m}$ \Comment{Resample OT pairs from $\pi$}
\State $p_t(x | x_0, x_1) \gets \mathcal{N}(x; t x_1 + (1 - t) x_0, \sigma^2 t (1-t))$
\State $x \sim p_t( x | x_0, x_1)$
\State $\mathcal{L}_{\rm [SF]^2M} \gets \| v_\theta(t, x) - u_t(x | x_0, x_1)\|^2 + \lambda(t)^2 \|s_\theta(t, x) - \nabla_{x}\log p_t(x|x_0, x_1)||^2$
\Comment {see (\ref{eq:brownian_bridge_flow}, \ref{eq:cond_sf2m})}
\State $\theta \gets \mathrm{Update}(\theta, \nabla_\theta \mathcal{L}_{\rm [SF]^2M})$
\EndWhile
\State \Return $v_\theta,s_\theta$
\end{algorithmic}
\end{algorithm}

\subsection{Implementation details and settings}

Throughout we use networks of three layers of width 64 with SeLU activations~\citep{klambauer_self-normalizing_2017} except for the 1000 dimensional experiment where we use width 256, and for the neural graphical model (NGM) model used in the gene regulatory network recovery task. For optimization we use ADAM-W~\citep{loshchilov_decoupled_2019} with learning rate $10^{-3}$ and weight decay $10^{-5}$. We train for 1,000 epochs unless otherwise noted. The flow network and score networks always have exactly the same structure. We provide a more detailed picture of the algorithm used in \autoref{alg:sf2m-detailed-train}. For sampling we always take 100 integration steps with either the Euler integrator for ODEs or Euler-Maruyama integrator for SDEs, except for the Gaussian experiments where we take 20 steps to match the setup of \citet{de_bortoli_diffusion_2021,shi_diffusion_2023}. When there are multiple timepoints (\eg, single-cell trajectories) we take 100 steps between each timepoint for a total of $100 k$ steps for all methods. We use $\sigma = 1$ unless otherwise noted.

\subsubsection{Weighting schedule $\lambda(t)$}
Similar to other score and diffusion-based losses, the [SF]$^2$M loss is defined with a weighting schedule $\lambda(t)$. Since $\nabla \log p_t(x | z)$ goes to infinity as $t$ tends to zero or one, we must standardize the loss to be roughly even over time. We set $\lambda(t)$ such that the target has zero mean and unit variance, \ie,\ we predict the noise added in sampling $x$ from $\mu_t$ before multiplying by $\sigma \sqrt{t (1-t)}$. This leads to the setting:
\begin{equation}
    \lambda(t) = \sigma_t = \sigma \sqrt{t ( 1- t )}
\end{equation}

this weighting schedule ensures that the regression target for $s_\theta$ is distributed $\mathcal{N}(0,1)$.

We also experiment with directly regressing $s_\theta$ against the scaled target $\frac{1}{2}\sigma^2 \nabla_x \log p_t(x | z)$, with a different weighting function
\begin{equation}\label{eq:lambda_schedule}
    \lambda(t) = \frac{2}{\sigma^2} \sigma_t = \frac{2 \sqrt{t (1- t)}}{\sigma}
\end{equation}
which also ensures the regression target for $\sigma$ is distributed $\mathcal{N}(0,1)$. With this $\lambda$, and since $\nabla_x \log p_t(x | z)$ with a Gaussian $p_t(x | z)$ can be simplified to $-\epsilon / \sigma_t$ where $\epsilon \sim \mathcal{N}(0,1)$ we have the simplified objective:
\begin{align}
    \lambda(t)^2 \left \| s_\theta(t, x) - \nabla_x \log p_t(x | x_0, x_1)\right\|^2 
    &= \| \lambda(t) s_\theta(t, x) - \lambda(t) \nabla_x \log p_t(x | x_0, x_1) \|^2 \\
    &= \left \| \lambda(t) s_\theta(t, x) + \epsilon_t \right \|^2 \label{eq:simplified_objective}
\end{align}

This is also numerically stable as it avoids dividing by anything that approaches zero. We leave improved weighting schedules to future work. In practice we use the schedule in (\ref{eq:lambda_schedule}) and the simplified objective in (\ref{eq:simplified_objective}).

\subsubsection{Static optimal transport}
As discussed in \autoref{supp:sec:mbot} it is sometimes preferable to use exact optimal transport instead of entropic optimal transport. In addition to the ``entropy'' added by minibatching, there are also numerical and practical considerations. In practice, we find that for batches of size $<10000$, implementations of exact OT through the Python Optimal Transport package (POT)~\citep{flamary_pot_2021} are often faster than implementations of the Sinkhorn algorithm~\citep{cuturi_sinkhorn_2013}, as the Sinkhorn algorithm is known to have numerical difficulties and needs many iterations for good approximation of the true entropic transport for small values of $\sigma$. %

\subsection{Computational resources}\label{supp:sec:resources}

All experiments were performed on a shared heterogeneous high-performance-computing cluster. This cluster is primarily composed of GPU nodes with RTX8000, A100, and V100 Nvidia GPUs, and CPU nodes with 32 and 64 CPUs.

\subsection{Two-dimensional experimental details}

For the two-dimensional experiments we follow the setup from \citet{shi_diffusion_2023}. This is adapted from the setup of \citet{tong_conditional_2023} except using larger test sets for lower variance in the empirical estimation of the Wasserstein distance. We use a training set size of 10,000, a validation set size of 10,000, and a test set size of 10,000 for all methods and models. 

We evaluate the empirical 2-Wasserstein distance for 10,000 forward samples from our model pushing the source distribution to the target. The number reported for $\mathcal{W}_2$ is then 
\begin{equation}
    \mathcal{W}_2 = \left ( \min_{\pi \in U(\hat{p}_1, q_1)} \int \|x - y\|_2^2 d \pi(x, y) \right )^{1/2}
\end{equation}
where $\hat{p}_1$ is sampled via \autoref{alg:sf2m-inference}, and $q_1$ is the test set.

As mentioned in the main text, we also measure the Normalized Path Energy. We note that this is only defined for ODE integration, hence for stochastic methods, (\ie, [SF]$^2$M, DSBM) we measure the normalized path energy of the probability flow ODE (see (\ref{eq:pfode})). The normalized path energy measures the relative deviation of the path energy of the model ($\int \|v_\theta(t, x)\|^2$) to the path energy of the optimal paths ($\mathcal{W}_2^2(q_0, q_1)$), which is equivalent to the squared 2-Wasserstein distance between the test set source and the test set target. More formally, the normalized path energy can be calculated as
\begin{equation}
    NPE(q_0, q_1, v_\theta) = \frac{| \E_{x(0) \sim q_0} \int \|v_\theta(t, x_t)\|^2 dt - \mathcal{W}_2^2(q_0, q_1) |}{\mathcal{W}_2^2}
\end{equation}
where $x_t$ is the solution of the probability flow ODE with $dx = v_\theta(t, x) dt$ with initial condition $x_0$. This measures how close the paths defined by $v_\theta(t, x)$ are to the optimal transport paths in terms of average energy. We note that measuring the path energy rather than the length ($\mathcal{W}_2^2$ instead of $\mathcal{W}_1$) has the additional benefit that the energy differentiates between models that follow the same paths, but at different rates, encouraging constant rate models.

\subsection{Gaussian-to-Gaussian experimental details}

Similar to experiments in \citet{de_bortoli_diffusion_2021,shi_diffusion_2023}, we train on a sample of 10,000 points from a $\mathcal{N}(-0.1, I)$ source to 10,000 points sampled from a $\mathcal{N}(0.1, I)$ target, for dimensions 5, 20, and 50. We could not find all the details of those previous experiments but we match what we can here. Since we know the closed-form solution to the Schr\"odinger bridge from \citet{mallasto_entropy_2021,bunne_recovering_2022} $q_t$, we can compare our estimate of the marginal at time $t$ ($\hat{p}_t$) with the true distribution at time $t$. In particular, we compare a Gaussian approximation of $\hat{p}_t$, $\tilde{p}_t$ with mean and covariance estimated from 10,000 samples of the model with $q_t$ which has the form 
\begin{equation}
    q_t(x) = \mathcal{N}\left (x; 0.2 t - 0.1, ( t ( 1-t) \sqrt{4 + \sigma^4} + (1-t)^2 + t^2) I \right )
\end{equation}
with the $\mathrm{KL}$ divergence.

We compare either the average $\mathrm{KL}$ over 21 timepoints (including the start and end timepoints) \ie,
\begin{equation}
    \overline{\mathrm{KL}} = \frac{1}{20} \sum_{k=0}^{20} \mathrm{KL}(\tilde{p}_{k / 20} || q_{k / 20})
\end{equation}
to measure how closely the learned flow matches the true Schr\"odinger bridge marginals, and we also compare the $\mathrm{KL}$ divergence at time $t=1$, to measure the performance as a generative model. 
\begin{equation}
    \mathrm{KL}(\tilde{p}_1 || q_1)
\end{equation}

\subsection{Waddington's landscape experimental details}
We use two Waddington landscapes, one Gaussian to two Gaussians and cross-sectional measurements of Embryoid Body (EB) data, to demonstrate the versatility of [SF]$^2$M for trajectory inference. The three dimensions of the landscape are space, time and potential.

More specifically, the space dimension is evolved according to the drift of the SDE by $u_t(x) = -\nabla_x W(t,x)$. The potential dimension is the Waddington's landscape by $W := E_v + \frac{1}{2} g(t)^2 E_s$, where $v_\theta(t, x) = -\nabla_x E_v(t, x)$ and $s_\theta(t, x) = - \nabla_x E_s(t, x)$ are the flow and score for Langevin dynamics. $E_s$ and $E_v$ are both parameterized by three layer neural networks of width 64, the only difference with our standard implementation of $v_\theta$ and $s_\theta$ is that they have output dimension width of one instead of $d$. 

For the experiment of one Gaussian to two Gaussians, we train on a sample of 256 points from the one-dimensional source $\mathcal{N}(0, 0.1)$ to 256 points sampled from the one-dimensional target $\mathcal{N}(-1, 0.1)\cup \mathcal{N}(1,0.1)$ for 10,000 steps. We then plot 20 trajectories from the source to the target with the potentials following from the gradient descend of $W$.

For the cross-sectional measurements from the embryoid body (EB) data, we first embed the data in one dimension with the non-linear dimensionality reduction technique PHATE~\citep{moon_visualizing_2019}, which we then whiten to ensure the data is at a reasonable scale for the neural network initialization. We train the [SF]$^2$M model following \autoref{alg:sf2m-trajectory-train} for 50,000 steps and plot 100 stochastic trajectories along with the height of $W(t, x)$ normalized so to gradually descend over time.

\begin{algorithm}
\caption{Trajectory Simulation-Free Score and Flow Matching Training}
\label{alg:sf2m-trajectory-train}
\begin{algorithmic}
\State {\bfseries Input:} Samplable source and target $Q=\{q_0, \cdots, q_{K-1}\}$, noise term $\sigma$, weighting schedule $\lambda(t)$, initial networks $v_{\theta}$ and $s_\theta$.
\While{Training}

\For {$k \in [0, \cdots, K-2]$}
    \State $x_k, x_{k+1} \sim q_k^{\otimes m}, q_{k+1}^{\otimes m}$
    \State $\pi \gets \mathrm{Sinkhorn}(x_k, x_{k+1}, 2 \sigma^2)$ \Comment{Or $\mathrm{OT}(x_k, x_{k+1})$ see \autoref{supp:sec:mbot}}
    \State $\mathbb{X}_k \sim \pi^{\otimes m}$ \Comment{Resample OT pairs from $\pi$}
\EndFor
\State $k \sim \mathcal{U}(1, K)^{\otimes m}$
\State $t \sim \mathcal{U}(0, 1)^{\otimes m}$
\State $x_0^i, x_1^i \gets \mathbb{X}_k^i$
\State $p_t(x | x_0, x_1) \gets \mathcal{N}(x; t x_1 + (1 - t) x_0, \sigma^2 t (1-t))$
\State $x \sim p_t( x | x_0, x_1)$
\State $\mathcal{L}_{\rm [SF]^2M} \gets \| v_\theta(t+k, x) - u_t(x | x_0, x_1)\|^2 + \lambda(t) \|s_\theta(t+k, x) - \nabla_{x}\log p_t(x|x_0, x_1)||^2$
\State $\theta \gets \mathrm{Update}(\theta, \nabla_\theta \mathcal{L}_{\rm [SF]^2M})$
\EndWhile
\State \Return $v_\theta,s_\theta$
\end{algorithmic}
\end{algorithm}

\subsection{Single-cell interpolation experimental details}\label{supp:sec:exp_details:landscape}

Here we perform two comparisons, the first matching the setup of \citet{tong_trajectorynet_2020} in low dimensions and the second exploring higher dimensional single-cell interpolation. Following \citet{huguet_geodesic_2022}, we repurpose the CITE-seq and Multiome datasets from a recent NeurIPS competition for this task~\citep{burkhardt_multimodal_2022} as well as the Embryoid-body data from \citet{moon_visualizing_2019,tong_trajectorynet_2020}, which has 5 population measurements over 30 days. 

For the Embryoid body (EB) data, we use the same processed artifact which contains the first 100 principal components of the data. For our tests in \autoref{tab:sc-dynamics-fived}, we truncate to the first five dimensions, then whiten each dimension following~\citet{tong_trajectorynet_2020} before interpolation. For the Embryoid body (EB) dataset which consists of 5 timepoints collected over 30 days we train separate models leaving out times $1,2,3$ in turn. During testing we push forward all observed points $X_{t-1}$ to time $t$ then measure the 1-Wasserstein distance between the predicted and true distribution. 

For the Cite and Multi datasets these are sourced from the Multimodal Single-cell Integration challenge at NeurIPS 2022, a NeurIPS challenge hosted on Kaggle where the task was multi-modal prediction~\citep{burkhardt_multimodal_2022}. Here, we repurpose this data for the task of time series interpolation. Both of these datasets consist of four timepoints from CD34+ hematopoietic stem and progenitor cells (HSPCs) collected on days 2, 3, 4, and 7. For more information and the raw data see the competition site.\footnote{\url{https://www.kaggle.com/competitions/open-problems-multimodal/data}} We preprocess this data slightly to remove patient specific effects by focusing on a single donor (donor 13176).

Since these data have the full (pre-processed) gene level single-cell data, we try interpolating on higher-dimensional unwhitened principle components, and on the first 1000 highly variable genes, which is a standard preprocessing step in single-cell data analysis. To our knowledge, [SF]$^2$M is the first method to scale to the gene space of single-cell data. In \autoref{tab:sc-dynamics}, we again measure the 1-Wasserstein distance between the push forward predicted distribution and the ground truth distribution.

\subsubsection{Geodesic ground costs}

We also introduce the Geodesic Sinkhorn method from \citet{huguet_geodesic_2022} for dynamic Schr\"odinger bridge interpolation. Here the cost is a geodesic cost based on a $k$-nearest-neighbour graph between cells.

\subsection{Gene regulatory network recovery experimental details}\label{supp:sec:exp_details:grn}

\begin{figure}[t]
     \centering
     \includegraphics[width=0.75\columnwidth]{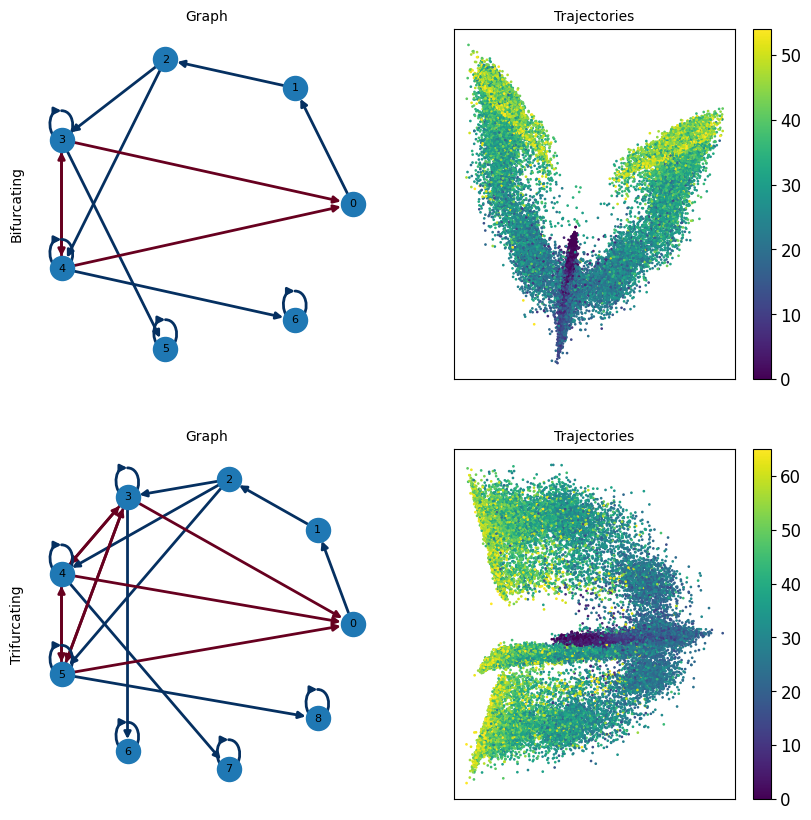}
    \caption{Simulated single-cell trajectories given synthetic GRNs emulating bifurcating (top) and trifurcating (bottom) systems. GRNs contain directed edges of Boolean relationships between genes. For example, a red edge between gene 3 and gene 0 (top left) indicates that the rule for gene 0 is (\textit{not} gene 3). Likewise, a blue edge between gene 6 and gene 4 indicates that rule for gene 6 is (gene 4 and gene 6). This follows from the procedure for defining synthetic GRNs using the BoolODE framework \citep{pratapa2020benchmarking}. The color bar indicates the scale of temporal progression.}
    \label{fig:boolode}
\end{figure}

Using the neural graphical model (NGM), we can parameterize the gene-gene interaction graph directly within the ODE drift model $v_\theta(t, x)$. To do so, following from \citet{bellot_neural_2022} we can define:
\begin{equation}
    v_{\theta_j}(t, x) = \phi(\cdots \phi(\phi(x \theta^{(1)}_j)\theta^{(2)}_j) \cdots)\theta^{(K)}_j, 
    \qquad j=1, \dots, d,
    \label{eq:ngm}
\end{equation}
where $\theta^{(1)}_j \in \mathbb{R}^{d \times h}$ represents a continuous adjacency matrix of the gene-gene interactions, $\theta^{(k)}_j \in \mathbb{R}^{h \times h}, k = 2, \dots, K-1$ are parameters of each proceeding hidden layer, $\theta^{(K)}_j \in \mathbb{R}^{h \times 1}$, $x \in \mathbb{R}^d$, and $\phi(\cdot)$ is an activation function. Then we can consider $v_\theta(t, x) = (v_{\theta_1}(t, x), \dots, v_{\theta_d}(t, x))$ as $h$ ensembles over structure $\theta^{(1)}$. We can then use Algorithm~\autoref{alg:sf2m-train} to train the NGM model with the addition of an $L^1$ penalty over structure to enforce sparsity on gene-gene interactions, i.e $\lambda_1\| \theta^{(1)} \|_1$. We include bias terms in our implementation of (\ref{eq:ngm}).

Using BoolODE \citep{pratapa2020benchmarking}, we generate simulated single-cell gene expression trajectories for a bifurcating system and a trifurcating system. For the bifurcating system we consider $7$ synthetic genes and generate trajectories over $1000$ cells using a simulation time of $5$ and an initial condition on gene $1$ at a value of $1$. We post-process the data and sub-sample to $55$ timepoints and scramble the cell pairing to emulate real-world data. For the trifurcating system we consider $9$ synthetic genes and generate trajectories over $800$ cells using a simulation time of $6$ and an initial condition on gene $1$ at a value of $1$. We post-process the data and sub-sample to $66$ timepoints and scramble the cell pairing to emulate real-world data. We use a train-test data split of $\{ 0.8, 0.2 \}$ respectively, and leave out the end timepoints for trajectory prediction evaluation. We the show underlying synthetic GRNs and simulated single-cell trajectories in \autoref{fig:boolode}.

For OT-CFM (\ie, {[SF]}$^2$M with $\sigma = 0$), we parameterize the NGM model with two hidden layers where $\theta^{(1)}_j \in \mathbb{R}^{d \times h}$ with $h=100$ and $d$ represents the number of input genes. Then the second layer (\ie, $k=2$) is $\theta^{(2)}_j \in \mathbb{R}^{h \times 1}$. We use this parameterization for both the bifurcating system and trifurcating systems. For {[SF]}$^2$M with $\sigma > 0$, we use two heads stemming from $\theta^{(1)}_j$ for the flow matching model and score matching model, respectively. Specifically, we use an additional layer $\tilde{\theta}^{(2)}_j \in \mathbb{R}^{h \times 1}$ such that $s_{\theta_j}(x, t) = \phi(\phi(x \theta^{(1)}_j)\theta^{(2)}_j), j=1, \dots, d$. We use the SeLU activation functions for both models. To train {[SF]}$^2$M and NGM-{[SF]}$^2$M models on the bifurcating system, we use the Adam optimizer with a learning rate of $0.01$ and batch size of $128$ and use $\lambda_1 = 10^{-5}$. On the trifurcating system, we use the Adam optimizer with a learning rate of $0.01$ and batch size of $64$ and use $\lambda_1 = 10^{-6}$. We generate results for $5$ model seeds. For baseline methods (\ie, Spearman, Pearson, DREMI, and Granger) we generate results over $5$ cell-pair scramble seeds. To evaluate GRN recovery performance, we compute the area under the receiver operator characteristic (AUC-ROC) and average precision (AP) scores of the predicted GRNs compared to the ground truth GRNs used for generating the simulated data. We mask out the diagonal elements (self regulation loops) of the predicted and ground truth GRNs for computing the AUC-ROC and AP. We provide the full results of the GRN recovery experiments in \autoref{supp:tab:boolode-grn}.
\end{document}